\newcommand{\alg}{\text{PTA-GRPO}\xspace}
\newcommand{\myparatights}[1]{\smallskip\noindent{\bf {#1}.}~}
\DeclareMathOperator{\E}{\mathbb{E}}
\theoremstyle{plain}
\theoremstyle{plain}
\newtheorem{theorem}{Theorem}[section]
\newtheorem{proposition}[theorem]{Proposition}
\theoremstyle{definition}
\theoremstyle{remark}
\newtheorem{remark}[theorem]{Remark}
\definecolor{cvprblue}{rgb}{0.21,0.49,0.74}
\definecolor{greyL}{RGB}{230,248,255}
\definecolor{yellow}{RGB}{255,246,234}
\definecolor{green}{RGB}{230,247,239}
\definecolor{blues}{RGB}{226,234,249}
\newtcolorbox{promptpbox}[2][]{%
  enhanced,
  colback=yellow,          
  colframe=blues,      
  coltitle=cvprblue,
  fonttitle=\bfseries\large,
  boxrule=1pt,            
  arc=3mm,                
  left=6pt,
  right=6pt,
  top=6pt,
  bottom=6pt,
  title=#2,
  #1
}
\newtcolorbox{rlpbox}[1][]{%
  enhanced,
  breakable,
  verbatim,
  colback=green!5,
  colframe=blues,
  coltitle=cvprblue,   
  fonttitle=\bfseries,
  boxrule=0.8pt,
  arc=2mm,
  left=8pt,
  right=8pt,
  top=6pt,
  bottom=6pt,
  title=#1
}
\newtcolorbox{innerbox}[2][]{%
  enhanced,
  breakable,
  colback=gray!5,
  colframe=#1!90!black,
  fonttitle=\bfseries,
  title=#2,
  boxrule=0.6pt,
  arc=2mm,
  left=5pt,
  right=5pt,
  top=4pt,
  bottom=4pt
}
\icmltitlerunning{Plan Then Action: High-Level Planning Guidance Reinforcement Learning 
for LLM Reasoning}
\begin{document}

\twocolumn[
  \icmltitle{Plan Then Action: High-Level Planning Guidance Reinforcement Learning 
for LLM Reasoning}



  \icmlsetsymbol{equal}{*}
 
  \begin{icmlauthorlist}
    \icmlauthor{Zhihao Dou}{equal,aaa}
    \icmlauthor{Qinjian Zhao}{equal,bbb}
    \icmlauthor{Zhongwei Wan}{equal,ccc}
    \icmlauthor{Dinggen Zhang}{bbb}
    \icmlauthor{Weida Wang}{ddd}
    \icmlauthor{Benteng Chen}{eee}
    \icmlauthor{Towsif Raiyan}{aaa}
    \icmlauthor{Qingtao Pan}{aaa}
    \icmlauthor{Yang Ouyang}{fff}
    \icmlauthor{Chaoda Song}{aaa}
    \icmlauthor{Zhiqiang Gao$^{\dagger}$}{bbb}
    \icmlauthor{Shufei Zhang$^{\dagger}$}{zzz}
    \icmlauthor{Sumon Biswas}{aaa}
  \end{icmlauthorlist}

  \icmlaffiliation{aaa}{Case Western Reserve University, Cleveland, OH, USA}
 \icmlaffiliation{bbb}{Kean University, Union, NJ, USA}
  \icmlaffiliation{ccc}{The Ohio State University, Columbus, OH, USA}
  \icmlaffiliation{ddd}{Fudan University, Shanghai, China}
  \icmlaffiliation{zzz}{Shanghai Artificial Intelligence Laboratory, Shanghai, China}
\icmlaffiliation{eee}{The University of Hong Kong, Hong Kong, China}
 \icmlaffiliation{fff}{North Carolina State University, Raleigh, NC, USA}

  \icmlcorrespondingauthor{Zhiqiang Gao}{zgao@wku.edu.cn}
  \icmlcorrespondingauthor{Shufei Zhang}{zhangshufei@pjlab.org.cn}

  \icmlkeywords{Machine Learning, ICML}

  \vskip 0.3in
]



\printAffiliationsAndNotice{
\icmlEqualContribution,
$^{\dagger}$Corresponding authors.
}

\begin{abstract}
Large language models (LLMs) demonstrate strong reasoning abilities via Chain-of-Thought (CoT), but their token-level generation encourages local decisions and lacks global planning, often leading to redundant or inaccurate reasoning. Existing methods, such as tree-based search and reinforcement learning (RL), attempt to address this issue but incur high computational costs and still struggle to produce reliable reasoning trajectories.
To address these challenges, we propose \textbf{P}lan-\textbf{T}hen-\textbf{A}ction Enhanced Reasoning with \textbf{G}roup \textbf{R}elative \textbf{P}olicy \textbf{O}ptimization (\textbf{\alg}), a two-stage framework designed to jointly improve high-level planning and fine-grained CoT reasoning. 
Specifically, in the first stage, a given LLM is responsible for summarizing CoT reasoning into compact high-level guidance, which is then leveraged for supervised fine-tuning. Then, we introduce a guidance-aware reinforcement learning method that jointly optimizes the final output and the quality of guidance, enhancing reasoning effectiveness.
We evaluate \alg on ten reasoning benchmarks across mathematics and natural sciences, using five diverse base models spanning multiple data modalities. The results show that \alg consistently delivers significant improvements across models and tasks, demonstrating strong effectiveness and generalization.

\end{abstract}

\section{Introduction}

\begin{figure*}[t]
    \centering
    \begin{subfigure}[b]{0.30\linewidth}
        \centering
        \includegraphics[width=\linewidth]{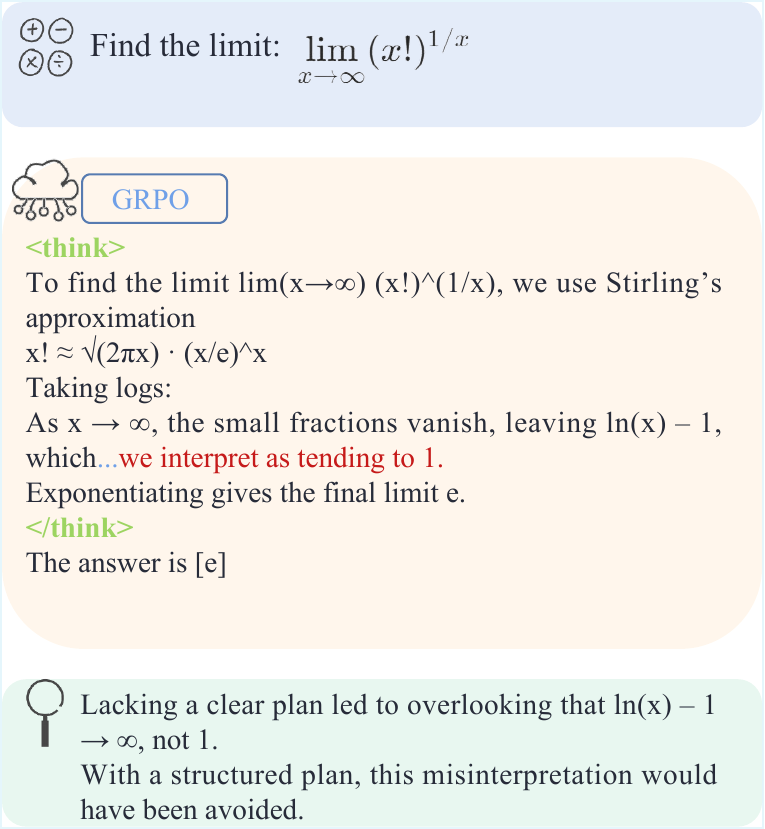}
        \caption{}
        \label{fig:grpo}
    \end{subfigure}
    \begin{subfigure}[b]{0.30\linewidth}
        \centering
        \includegraphics[width=\linewidth]{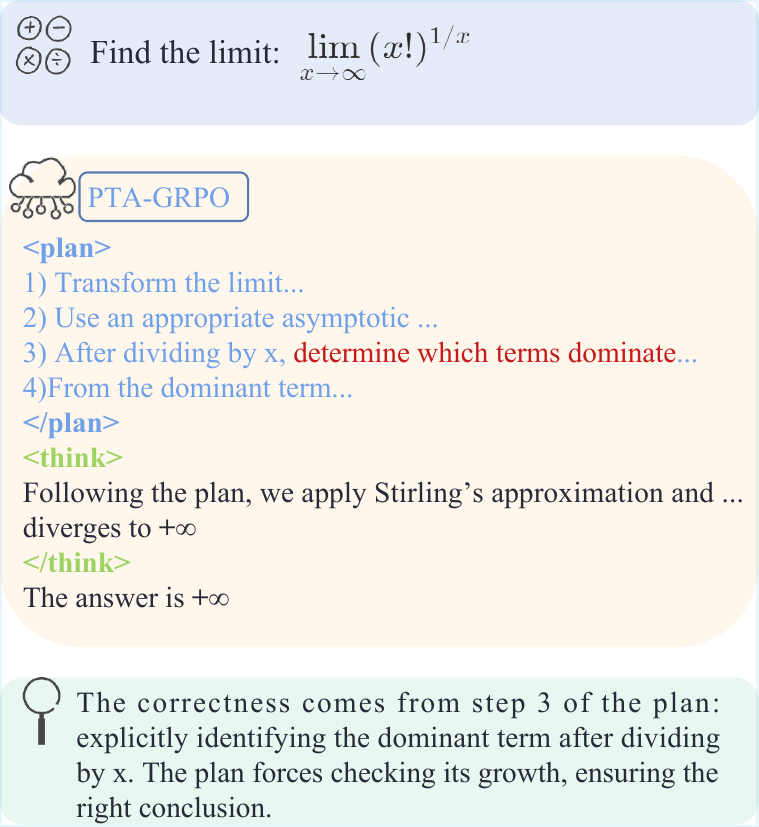}
        \caption{}
        \label{fig:PTA1}
    \end{subfigure}
    \begin{subfigure}[b]{0.30\linewidth}
        \centering
        \includegraphics[width=\linewidth]{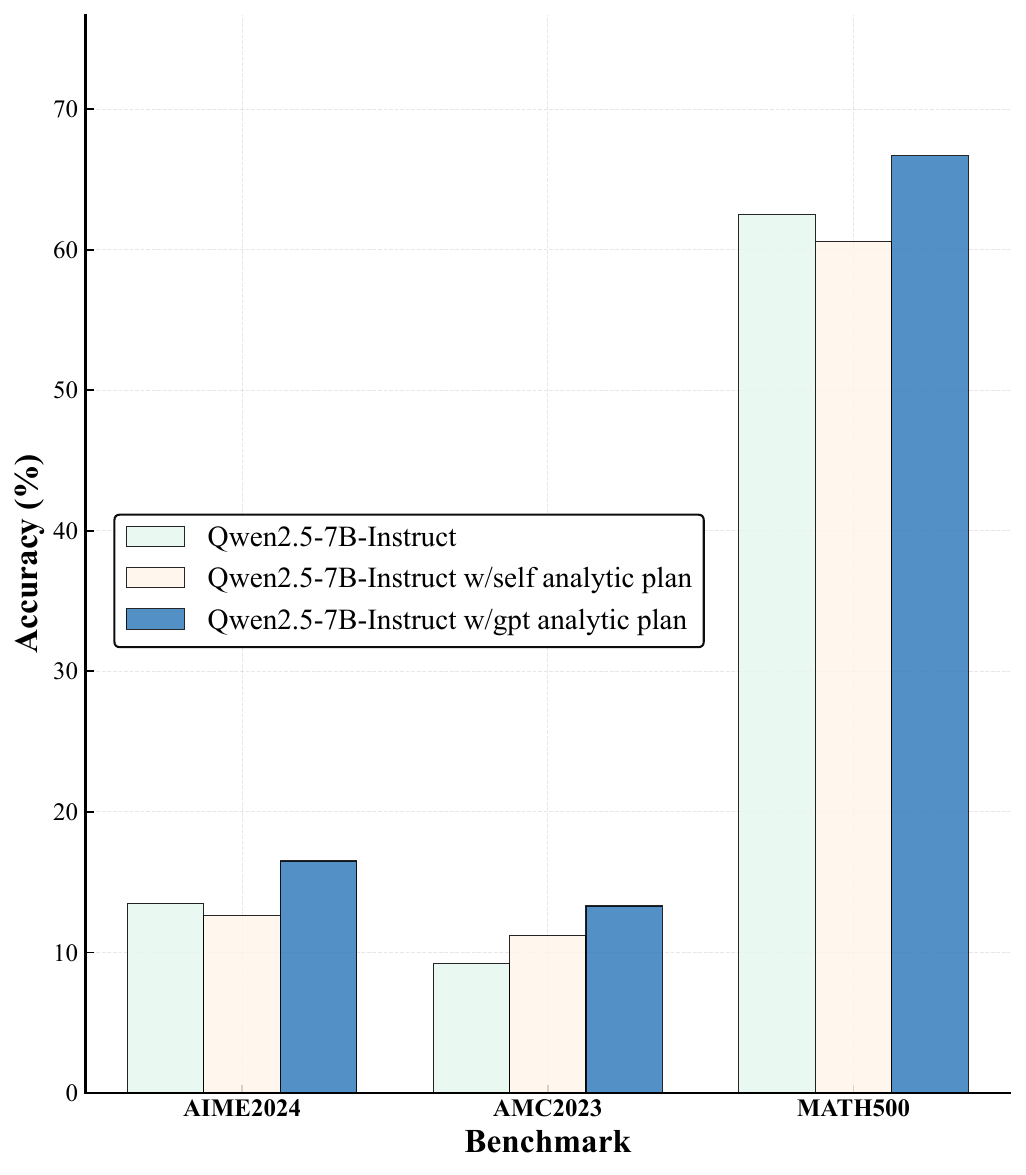}
        \caption{}
        \label{fig:PTA2}
    \end{subfigure}

    \caption{\small{(a) GRPO reasoning process. (b) \alg reasoning process. (c) Impact of analytic plan. In (c), the accuracy of different reasoning modes, where Qwen2.5-7B-Instruct is considered as the base model. 
    Green indicates the base model using CoT reasoning, yellow indicates the base model reasoning with its own self-generated analytic plan, and blue indicates the base model reasoning with an analytic plan generated by GPT-o1. More test cases of \alg are shown in Appendix~\ref{sec: test cases.}.}}
    \label{fig:impact}
\end{figure*}

Large Language Models (LLMs) have recently demonstrated remarkable reasoning abilities across a wide range of complex tasks \citep{xu2025towards, plaat2024reasoning, ke2025survey}, such as mathematics \citep{zhang2024llama,wu2023mathchat,liu2023improving} and programming \citep{jiang2024survey}, by leveraging Chain-of-Thought (CoT) reasoning \citep{wei2022chain}. Models with strong reasoning capabilities, including Qwen-3 \cite{yang2025qwen3}, DeepSeek-R1 \citep{wu2024deepseek}, Seed-1.5 thinking \citep{seed2025seed1}, and GPT-5 thinking \citep{openai_gpt5_systemcard}, adopt CoT as a central mechanism to structure their reasoning processes.  
However, CoT decoding in LLMs is still a token-level Markov Decision Process (MDP) \citep{ouyang2022training,wan2025srpo,liu2025understanding}: the output of each token is determined by the context sequence generated previously. Under this setting, mainstream decoding is both autoregressive (each decision conditions only on the prefix) and locally greedy (it optimizes short-horizon token likelihood, e.g., via greedy/low-temperature choices). This combination preserves local consistency but offers little global planning, often yielding redundant or drifting chains of thought and propagating early mistakes across long horizons \citep{yao2023tree,qu2025survey,wan2025srpo}.

Prior work augments LLM reasoning with tree-style algorithms \citep{zhang2024llama,yao2023tree,wang2024litesearch} such as Monte Carlo Tree Search \citep{zhang2024llama} or heuristic generation tree \citep{li2025treepo} to widen exploration beyond single-path decoding. While effective in some cases, these approaches hinge on repeated external queries to the LLM, incurring substantial time and compute \citep{wang2024litesearch}. Crucially, they do not strengthen the model’s internal reasoning: performance stems from outside search. When the model cannot verify intermediate steps, the search simply amplifies bad branches and collapses \citep{feng2023alphazero}.
In parallel, recent works inject reflection or backtracking behaviors via RL \citep{wan2025srpo,wang2025vl,gandhi2025cognitive}. Such behaviors can, in principle, re-route trajectories and escape local optima \citep{gandhi2025cognitive,liang2026render}. Yet when triggered on corrupted partial solutions, the model tends to reflect on its own errors, reinforcing them and drifting farther from the correct path. This occurs largely due to the absence of a global plan to guide self-reflection, leaving the model without a reliable mechanism to recover.
These limitations motivate a new paradigm that \textit{improves internal planning rather than relying on external search or post-hoc self-correction}.

Motivated by human problem-solving behavior, where an abstract plan is sketched before execution \citep{kahneman2011thinking}, we explore whether LLM reasoning can similarly benefit from an explicit plan-then-execute paradigm. Concretely, an LLM first generates a compact analytic plan that provides global guidance—such as subgoal decomposition and execution order—and then conditions detailed chain-of-thought (CoT) reasoning on this plan. This global conditioning mitigates local myopia and reduces redundant or inconsistent reasoning trajectories.
However, analytic planning is itself a challenging capability. We observe that weaker models, such as Qwen-2.5-7B-Instruct \citep{bai2023qwen}, often fail to produce reliable plans. As shown in Fig.~\ref{fig:PTA2}, low-quality plans can degrade downstream CoT and final answers, whereas plans generated by stronger models (e.g., GPT-o1) lead to consistent performance gains. This contrast indicates that reasoning improvements critically depend on the quality of analytic plans rather than the mere presence of planning. However, existing plan–action based approaches \cite{yao2022react,wang2023plan,wan2025rema,dou2025dsadf,liang2026vanim} emphasize structural separation between planning and execution, without explicitly formulating plan quality as a reward signal for policy optimization.

These observations suggest that analytic planning should be treated as a first-class optimization target. A natural approach is to leverage reinforcement learning (RL), which enables trajectory-level, non-differentiable optimization and can align plan generation with downstream reasoning behavior. However, existing outcome-based RL with verifiable rewards (RLVR) methods—such as GRPO \citep{shao2024deepseekmath} and DAPO \citep{yu2025dapo}—optimize only final answer correctness, ignoring the quality of intermediate planning and reasoning (Fig.~\ref{fig:overall}). Consequently, poorly structured plans and CoT trajectories may receive the same reward as well-organized ones, limiting the model’s ability to learn robust global planning.
Meanwhile, certain plan-based reasoning architectures \cite{yao2022react,zhou2022least,erdogan2025plan} explicitly decouple the planning and execution stages. Such a design makes it difficult to obtain reliable and verifiable reward signals for the planning component itself, as its quality cannot be directly or independently evaluated.

To address this limitation, we propose \textbf{\alg} (\textbf{p}lan-\textbf{t}hen-\textbf{a}ction enhanced reasoning with
\textbf{G}roup \textbf{R}elative \textbf{P}olicy \textbf{O}ptimization), a two-stage plan-then-reason training framework that explicitly optimizes both analytic planning and detailed reasoning. Unlike prior works \cite{yao2022react,wan2025rema} that treat the plan merely as an additional structural component, we formulate the plan as an explicit optimization variable. In the first stage, we introduce a Planning-Structured Reasoning cold-start strategy, where a given LLM summarizes ground-truth CoT into concise high-level guidance that captures core concepts and global reasoning structure. This guidance, paired with the original CoT, forms a supervised fine-tuning (SFT) dataset that initializes explicit planning ability—an ability largely absent from base pre-trained models \citep{gandhi2025cognitive,yue2025does,li2025treepo}.
In the second stage, we develop a plan-guidance-aware RL method based on GRPO. Unlike standard GRPO, which rewards only the final response, our approach incorporates a refined reward signal that evaluates the quality of the generated high-level guidance during reasoning. This design encourages the model to produce not only correct answers but also effective and precise analytic plans, leading to more stable and globally guided reasoning trajectories. 
Through rigorous theoretical derivations and empirical validation, we demonstrate that the proposed reinforcement learning method can significantly increase the mutual information between predicted answers and ground-truth answers, thereby improving the accuracy of the model’s generated outputs.
Experimental results demonstrate that our method achieves significant performance improvements across multiple benchmarks and domains, and is applicable to both LLMs and MLLMs.
Our main contributions are summarized as follows:
\begin{itemize}[leftmargin=*]
    \item \textbf{A novel two-stage plan-reasoning framework:} We propose \alg, a two-stage training framework, including high-level guidance planning and guidance-aware reinforcement learning, to foster explicit higher-order planning and reasoning abilities in LLMs.
    
    \item \textbf{High-level guidance as supervision signal:} In the supervised fine-tuning stage, we leverage a given LLM to summarize raw chain-of-thought (CoT) into concise high-level guidance, which is combined with the original CoT, providing stronger initialization for reasoning.
    
    \item \textbf{Impressive reasoning performance:} We evaluate the proposed method across five different models, ten benchmarks, and a variety of domains and modalities. Experimental results show that it achieves significant improvements in the vast majority of settings and attains excellent performance in most evaluated scenarios.
\end{itemize}

\section{Preliminaries and related work}

\subsection{Reasoning in Large Language Models}
\label{sub:reason}

The reasoning of an LLM can be formalized as a token-level Markov Decision Process (MDP) \citep{ouyang2022training,wan2025srpo,liu2025understanding}, where the state is the context sequence, the action is the next token, and the policy is the model’s conditional distribution. Given a question $q$, a response $\mathfrak{o}=[\mathfrak{o}^1,\dots,\mathfrak{o}^T]$ is sampled step by step from $\pi_{\theta}(\cdot \mid q,\mathfrak{o}^{<t})$. Current inference typically relies on CoT, producing a reasoning chain $c$ and final answer, but this purely autoregressive process lacks global planning, often leading to redundancy and incoherence \citep{wan2025srpo}.

\subsection{Group Relative Policy Optimization and Its Extensions}

GRPO \cite{shao2024deepseekmath}, proposed by DeepSeek, enhances LLM reasoning without value models by sampling multiple responses per prompt and using the group average reward as a baseline. This simple mechanism has proven effective in mathematical reasoning, code generation, and QA.
Subsequent variants refine GRPO from different perspectives: SRPO \cite{zhang2025srpo} reuses samples via history resampling; DAPO \cite{yu2025dapo} filters extreme cases with dynamic sampling; Dr.GRPO \citep{liu2025understanding} mitigates length bias; EMPO \cite{zhang2025right} optimizes semantic entropy directly; and SEED-GRPO \cite{chen2025seed} integrates entropy as an uncertainty measure for more conservative updates. While these methods substantially improve mathematical reasoning, they do not explicitly target higher-order reasoning abilities.
GiGPO \cite{feng2025group} introduces a hierarchical episode-level and step-level relative advantage estimation within the group-based reinforcement learning framework, enabling fine-grained credit assignment for multi-turn LLM agents while significantly improving long-horizon decision-making performance without requiring additional rollouts or critic networks.

\subsection{Motivation}

To address the lack of global guidance in LLM reasoning, which often leads to redundancy or off-topic reasoning, inspired by human thinking habits for complex tasks or problems \citep{kahneman2011thinking,kahneman2013prospect}, we introduce a concise high-level plan $t$ as an outline before generating the detailed CoT $c$ and its corresponding answer. Formally, the model’s output can be represented as $\mathfrak{o} = {t, c}$, where $t$ provides the overall problem-solving direction without involving concrete computational steps, and $c$ is then generated conditioned on both the question $q$ and the plan $t$, i.e., $c = \pi_\theta(\cdot \mid q, t)$. The CoT $c$ and its final answer are guided by the high-level plan $t$.
This \textit{plan-then-reason} mechanism equips the reasoning process with global guidance, leading to more concise and accurate CoT.

Therefore, in GRPO optimization (the formulas are shown in Appendix~\ref{sec:grpo}) in our study, the objective goes beyond simply ensuring the correctness of the answer in $\mathfrak{o}$. It also includes enhancing the quality of the high-level plan $t$, with the aim of producing $t$ more accurately and effectively. By improving $t$, the model receives structured guidance that can better direct the generation of the CoT $c$ and, consequently, the final answer. 
This dual focus rewards correct answers while strengthening high-quality intermediate reasoning, resulting in more robust and generalizable reasoning.

\begin{figure*}[t]
    \centering
    \includegraphics[width=0.84\linewidth]{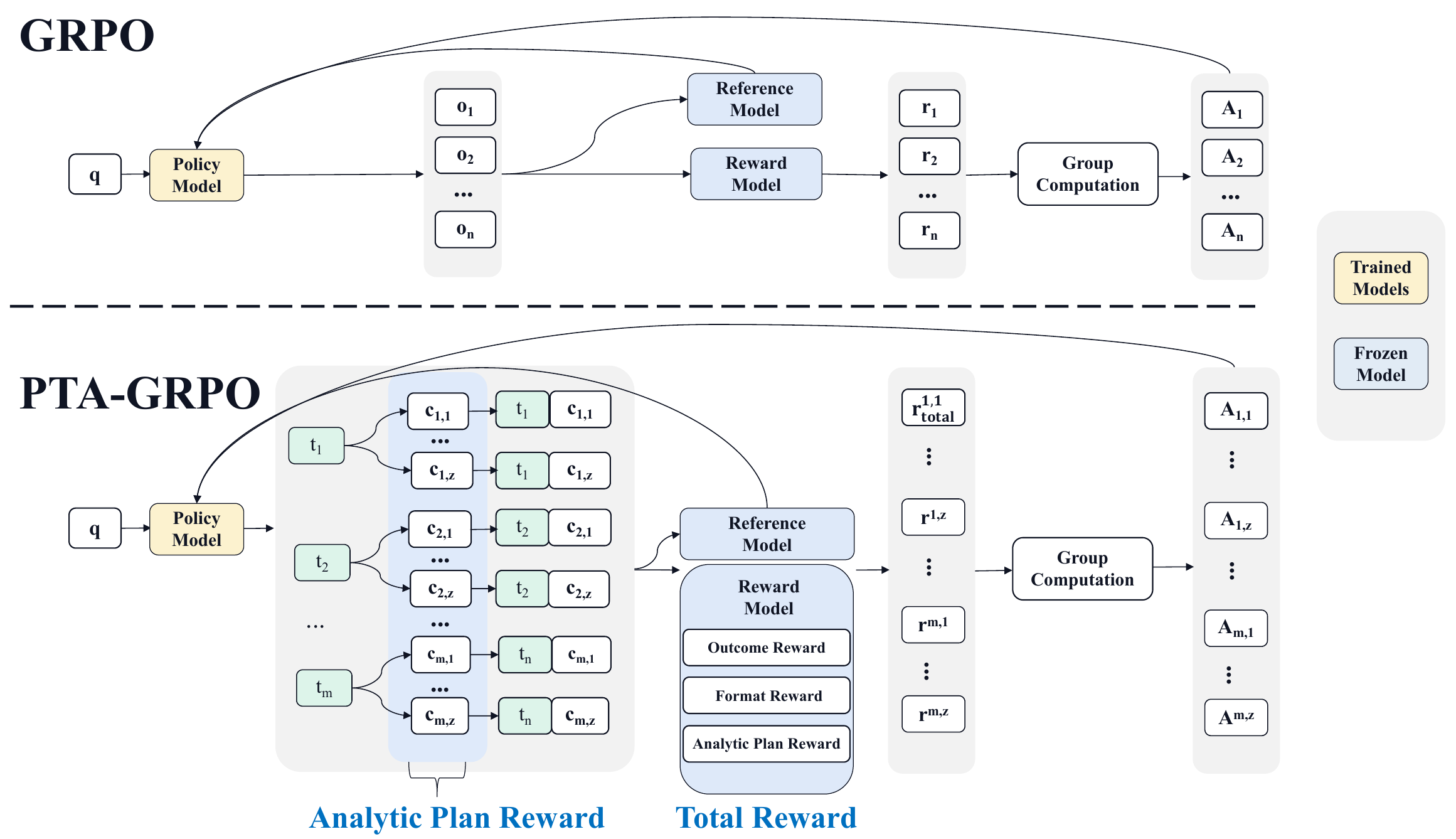}
    \caption{\small{Comparison between GRPO and \alg. It is worth noting that, to ensure a fair comparison, the number of rollout responses is kept the same between GRPO and PTA-GRPO.}}
    \label{fig:overall}
    
\end{figure*}

\section{Approach of \alg}
 
We introduce the \alg training framework, which consists of two key components. 
\textbf{(1) Planning Structured Reasoning Cold-Start (PSR-CS).} 
This module adopts an SFT-based cold-start strategy by explicitly introducing a general analytic plan before detailed reasoning, providing high-level guidance for subsequent reasoning and answer generation.
\textbf{(2) Planning Structure-Guided Reinforcement Learning (PSG-RL).}
We propose a GRPO-based structure-guided reinforcement learning framework that explicitly models the quality of analytic plan as a reward signal and incorporates it into the optimization objective, thereby improving the model’s structured reasoning capability and answer generation performance.

\subsection{Planning Structured Reasoning Cold-Start (PSR-CS)}
\label{sec:sft}

\myparatights{Analytical-Guided SFT Dataset Construction}
For LLMs, the ability to perform effective planning plays a critical role in problem solving. However, existing supervised fine-tuning (SFT) datasets typically include only chain-of-thought (CoT) reasoning and final answers, while overlooking the importance of high-level analytic plan prior to reasoning. To address this limitation, we propose an analytical-guided dataset consisting of three components: the problem, a general analytic plan, and the corresponding response containing CoT reasoning and the final answer.
Formally, the dataset is defined as $
D_{\text{PSR-CS}} = \{(q_i, t_i, c_i)\}_{i=1}^n,$
where $q_i$ denotes the problem, $t_i$ represents the general analytic plan, and $c_i$ contains the detailed reasoning process and final answer.
Unlike directly producing CoT reasoning~\citep{wei2022chain}, which often lacks global guidance, SFT explicitly injects the high-level problem-solving plan $t_i$ during training, enabling the model to learn the conditional distribution $c_i = \pi_\theta(\cdot \mid q_i, t_i)$ and to leverage global information when generating reasoning chains.
In our dataset, the analytic plan $t_i$ is enclosed within \texttt{<plan>...</plan>} tags, while the reasoning and answer in $c_i$ are structured using \texttt{<think>...</think>} and \texttt{<answer>...</answer>} tags, respectively, forming a hierarchical representation of planning, reasoning, and answering.
In contrast to prior approaches that require multi-turn interactions to obtain high-level guidance~\citep{yao2022react,zhou2022least}, our design integrates planning and reasoning--answering into a single compact response, allowing the model to complete planning and execution in one pass and enabling efficient joint optimization of plan--action components under RLVR in Section \ref{PTA}.

In practice, we sample 10K instances from the Openthoughts dataset~\citep{guha2025openthoughts} as the base data.
Then, a general analytic plan $t_i$ will be summarized from the given problem $q_i$ and its corresponding reasoning $c_i$ by employing an existing LLM (e.g., Qwen3-235B~\citep{yang2025qwen3}).
Notably, as the target of this process is to simply summarize the reasoning steps to a general plan, it will not heavily rely on the reasoning capability of the LLM.
To verify this nature, the experiments in Table \ref{tab:self} (Appendix \ref{sec:self}) demonstrate that the plan summarized by the given LLM itself also produces the obvious improvement.

\myparatights{SFT-based Cold-Start Initialization Optimization}
At this stage, we inject structured reasoning capabilities into the initial policy model $\pi_{\theta}$ through SFT.
Specifically, we optimize the model parameters by minimizing the discrepancy between the model outputs and the reference outputs in the analytical-guided dataset $D_{\text{SRCS}}$, enabling the model to gradually learn structured reasoning patterns. The fine-tuning process is formulated as:
\begin{equation}
\small
\theta_{\text{SFT}} = \underset{\theta}{\min}\quad 
\mathbb{E}_{(q_i,t_i,c_i) \in \mathcal{D}_{\text{SRCS}}}
\left[-
\sum_{i=1}^{n} \log\left( \pi_{\theta}(t_i, c_i \mid q_i) \right)
\right],
\end{equation}
where $\theta_{\text{SFT}}$ denotes the parameter set obtained through supervised fine-tuning, and $\pi_{\theta_{\text{SFT}}}$ represents the resulting policy model equipped with structured reasoning capabilities. 
Through SFT-based Cold-Start Initialization, we inject planning capabilities into the policy model in a direct manner, thereby expanding its knowledge and capacity for analytic planning and providing effective supervision signals \cite{shah2025rethinking} for accurate answer generation.

\subsection{Plan Structure-Guided Reinforcement Learning (PSG-RL)}
\label{PTA}

After obtaining the policy model $\pi_{\theta_{\text{SFT}}}$ from the SFT stage, the RL phase then focuses on improving the model's planning capability and ensuring its effective execution. At this stage, we not only consider the correctness of CoT $c$ and its answer as part of the reward signal, but also evaluate the quality of the analytic plan $t$, which is incorporated as another important aspect of the reward signal.

\subsubsection{Analytical Plan–Guided Reward Augmentation in GRPO}

In \alg, we design a composite reward function that integrates three aspects: the analytic plan reward ($r_{\text{analytic}}$) to encourage structured reasoning plans, the outcome accuracy reward ($r_{\text{outcome}}$) to ensure correct final results, and the structured format reward ($r_{\text{format}}$) to enforce clear and consistent output. Together, these rewards are combined into the final total reward $R_{\text{total}}$.

\myparatights{Analytical Plan Reward}
Since directly evaluating the quality of an analytic plan $t$ is infeasible, we adopt a computable surrogate objective that measures how likely a plan can guide a CoT reasoning process toward the correct answer. We define the analytic plan reward $r_{\text{analytic}}$ as this probability, which serves as a proxy for plan quality.

Given a question $q$, we construct a response group $G$ via a two-stage sampling process. First, the policy model samples $m$ candidate analytic plans $\{t_i\}_{i=1}^m$, where $t_i \sim \pi_{\theta}(\cdot \mid q)$ and each $t_i$ is a concise text-based outline for solving $q$. Then, following \citep{lu2025arpo}, for each plan $t_i$ we resample $z$ detailed CoT trajectories $\{c_{i,k}\}_{k=1}^z$ under its guidance, where $c_{i,k} \sim \pi_{\theta}(\cdot \mid t_i, q)$.
The resulting response group is
\[
G = \Bigl\{ \{(t_i, c_{i,k})\}_{k=1}^z \Bigr\}_{i=1}^m.
\]

Each element in $G$ forms a plan--CoT pair. The reward assigned to an analytic plan $t_i$ is defined as the empirical accuracy of its resampled outcomes:
\begin{equation}
\begin{aligned}
s_i &= \frac{1}{z}\sum_{k=1}^{z}\mathbb{I}[\hat{y}_{i,k}=y], &
r_{\mathrm{analytic}}(t_i)
=
\left[\operatorname{softmax}(\mathbf{s}/\tau)\right]_i,
\end{aligned}
\label{eq:analytic_reward}
\end{equation}
where $\mathbb{I}[\cdot]$ is the indicator function, $\hat{y}_{i,k}$ denotes the final predicted answer extracted from $c_{i,k}$, and $y$ is the ground-truth answer for $q$. The Softmax operation amplifies relative score differences, highlighting high-quality plans while suppressing low-quality ones.
%
%
Importantly, the CoT set $\{c_{i,k}\}_{k=1}^z$, used to assess analytic plan $t_i$ quality, is also incorporated into the policy update in Group $G$, enabling a dual-use mechanism without extra sampling overhead. Compared to RLVR methods with the same rollout budget (e.g., GRPO), our approach does not incur noticeable additional time cost. RL training time is detailed in Appendix~\ref{sec:time}.

Optimizing the policy with $r_{\text{analytic}}(\cdot)$ encourages the generation of more accurate analytic plans and increases the probability of correct prediction $\Pr(\hat{y} = y \mid t, q)$.
In contrast to traditional RLVR methods \citep{yu2025dapo,feng2025group}, which rely solely on outcome-based supervision and cannot directly supervise intermediate reasoning, the analytic plan reward $r_{\text{analytic}}$ enables direct assessment of intermediate reasoning trajectories and assigns higher rewards to those more likely to succeed.
Section~\ref{sec:theory} shows, both theoretically and empirically, that optimizing $r_{\text{analytic}}$ increases the mutual information between $y$ and $\hat{y}$, thereby enhancing reasoning ability.

\myparatights{Outcome Reward}
The outcome reward, defined as $r_{\text{outcome}}$, is a result-based terminal reward similar to GRPO, used to evaluate whether the predicted answer aligns with the ground truth. For each plan–CoT response $(t_i, c_{i,k})$, the outcome reward $r_{\text{outcome}}$ is defined as follows:
\begin{equation}
 r_{\text{outcome}} = \begin{cases} 1,&\hat{y}_{i,k} = y ,\\ 
 0,&\text{else}.
 \end{cases}\label{eq:i_eff}
\end{equation}
The outcome reward $r_{\text{outcome}}$ encourages the policy model to learn to follow the analytic plan $t_i$ and to develop the ability to generate correct answers that strive for correctness.

\myparatights{Format Reward}
To ensure structural consistency and concise outputs, we incorporate a format reward $R_{\text{format}}$ that enforces a predefined response template and discourages overly long responses. The detailed formulation is in the Appendix~\ref{sec:format}.

\myparatights{Total Reward} The above three rewards together constitute the total reward $R_{\text{total}}$ for each response as:
\begin{equation}
    R_{\text{total}} = R_{\text{outcome}} + \beta \cdot R_{\text{analytic}} +R_{\text{format}},
\end{equation}
where $\beta$ represents the hyperparameter.
We first obtain a total reward set $\{\{r_{total}^{i,k}\}_{i=1}^m\}_{k=1}^z$, where $r_{total}^{i,k}$ denotes the total reward of the $k$-th CoT generated under the guidance of the $i$-th analytic. Based on this reward, we compute the corresponding advantage function $A_{i,k}$ using Eq.~\ref{advantages}, and subsequently incorporate it into the update rule in Eq.~\ref{grpo:fun} to optimize the model. 

\begin{table}[h]
\centering
\caption{\small{Performance comparison of different post-training methods using various base models. \textbf{Bold} is best per block.}}
\scalebox{0.70}{ 
\begin{tabular}{c|ccccc}
\midrule[1.2pt]
Method                       & MATH500        & AIME24         & AIME25         & AMC23          & Average   \\ \midrule
\textbf{Qwen2.5-7B-Instruct} & 63.93          & 14.25          & 3.77           & 54.95          & 34.23 \\
GRPO                         & 90.67          & 30.15          & 25.20          & 73.21          & 54.81 \\
DAPO                         & 91.87          & 31.64          & 22.53          & 74.12          & 55.04 \\
CPL                          & 87.53          & 28.55          & 24.49          & 72.53          & 53.28 \\
Full-Step-DPO                & 89.25          & 27.85          & 23.39          & 69.52          & 52.50 \\
ORZ                          & 88.55          & 30.44          & 26.27          & 72.93          & 54.55 \\
\rowcolor{blues}
\alg                          & \textbf{92.53} & \textbf{34.53} & \textbf{29.25} & \textbf{77.51} & \textbf{58.46} \\ \midrule

\textbf{LLaMA3.2-3B}         & 36.64          & 3.97           & 2.88           & 19.68          & 15.79 \\
GRPO                         & 62.23          & 19.55          & 18.29          & 46.44          & 36.63 \\
DAPO                         & 63.97          & 18.97          & 18.55          & 46.44          & 36.98 \\
\rowcolor{blues}
\alg                          & \textbf{66.25} & \textbf{24.92} & \textbf{20.75} & \textbf{49.55} & \textbf{40.37} \\ \midrule

\textbf{Qwen3-8B}            & 90.65          & 65.27          & 52.33          & 88.58          & 74.21 \\
GRPO                         & \textbf{94.33} & 71.57          & 54.94          & 93.51          & 78.59 \\
DAPO                         & 93.09          & 69.17          & 52.77          & 92.35          & 76.85 \\
CPL                          & 92.22          & 70.79          & 52.33          & 91.52          & 76.72 \\
Full-Step-DPO                & 93.03          & 71.78          & 50.65          & 92.37          & 76.96 \\
ORZ                          & 93.95          & 68.67          & 53.97          & 92.27          & 77.22 \\
\rowcolor{blues}
\alg                          & 93.79          & \textbf{72.68} & \textbf{56.14} & \textbf{93.77} & \textbf{79.10} \\ \midrule

\textbf{Qwen3-14B}           & 91.75          & 72.39          & 70.55          & 93.97          & 82.17 \\
GRPO                         & 92.48          & 73.62          & 71.33          & 94.87          & 83.08 \\
DAPO                         & 93.27          & 72.55          & 71.77          & \textbf{95.66} & 83.31 \\
\rowcolor{blues}
\alg                          & \textbf{96.15} & \textbf{75.44} & \textbf{73.29} & 95.17          & \textbf{85.01} \\ \midrule[1.2pt]
\end{tabular}
}
 
\label{tab:main}
\end{table}

\myparatights{Advantages of \alg}
Compared with standard GRPO, which primarily relies on sparse task-level accuracy supervision, our guidance-aware \alg framework introduces several critical improvements. \textbf{First}, powered by the analytic-plan reward~$r_{\text{analytic}}$, the model gains the ability to \emph{evaluate} its intermediate reasoning process, which RLVR cannot achieve with purely outcome-based signals. This mechanism drives the model to construct higher-level analytic plans and use them to guide more reliable CoT reasoning. \textbf{Second}, the outcome reward $r_{\text{outcome}}$ encourages the policy model to follow the analytic plan and enhance its reasoning capability under such structured guidance. Besides, following \citep{gandhi2025cognitive}, PTA-GRPO enhances LLM self-reflection in reinforcement learning by adjusting the prompt in Appendix \ref{sec: test cases.}, allowing the model to correct later steps even when the initial plan is flawed.

\subsection{Theoretical Performance Analysis}
\label{sec:theory}
 
In this section, we theoretically analyze the impact of optimizing $r_{\text{analytic}}$ on the error probability of the policy model. Our theoretical findings are as follows.

\begin{theorem}
\label{thm:1}
Let $q$ denote the input question, $t$ the analytic plan, $\hat{y}$ the answer predicted by the policy model, and $y$ the ground-truth answer. With error probability $p_{\text{error}}$, it holds that:
{\small
\begin{align*}
    p_{\text{error}} \leq \frac{1}{2} \big[ H(y) - I(\hat{y},y \mid t,q) \big], 
    \quad p_{\text{error}} = \Pr(y \neq \hat{y}),
\end{align*}
}
where $H(\cdot)$ denotes the entropy, $I(\cdot)$ denotes the mutual information.

\end{theorem}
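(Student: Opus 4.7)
The plan is to prove this as a conditional Fano-style inequality: the analytic plan $t$ and the question $q$ act as side information, and the bound arises by first relating $p_{\text{error}}$ to the conditional entropy $H(y\mid\hat{y},t,q)$, then rewriting this entropy via mutual information. The backbone identity is
\[
I(\hat{y}; y \mid t, q) \;=\; H(y \mid t, q) \;-\; H(y \mid \hat{y}, t, q),
\]
together with the monotonicity $H(y\mid t,q) \leq H(y)$ from the fact that conditioning cannot increase entropy.

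I would proceed in two main steps. First, I would establish the weak-Fano-type bound
\[
p_{\text{error}} \;\leq\; \tfrac{1}{2}\, H(y \mid \hat{y}, t, q).
\]
To see this, introduce the binary error indicator $E = \mathbb{I}[\hat{y}\neq y]$. Since $E$ is a deterministic function of $(y,\hat{y})$, the chain rule yields $H(y\mid\hat{y},t,q) \geq H(E\mid\hat{y},t,q) = \mathbb{E}[h_2(P_e(\hat{y},t,q))]$, where $P_e(\hat{y},t,q) := \Pr(y\neq\hat{y}\mid\hat{y},t,q)$ and $h_2$ is binary entropy in bits. The elementary inequality $h_2(p)\geq 2p$ on $[0,\tfrac{1}{2}]$, applied pointwise under the mild assumption that the policy is at least as accurate as random guessing at every realization of $(\hat{y},t,q)$, then gives $\mathbb{E}[h_2(P_e)] \geq 2\mathbb{E}[P_e] = 2p_{\text{error}}$. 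Second, I would substitute the mutual-information decomposition followed by $H(y\mid t,q)\leq H(y)$:
\[
H(y \mid \hat{y}, t, q) \;=\; H(y \mid t,q) - I(\hat{y};y\mid t,q) \;\leq\; H(y) - I(\hat{y};y\mid t,q).
\]
Chaining the two displays yields the claimed inequality.

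The principal obstacle I expect is the first step, since Fano's classical inequality upper-bounds $H(y\mid\hat{y})$ by a function of $p_{\text{error}}$, whereas here we need the opposite direction. The cleanest route is precisely the binary-indicator reduction above, which sidesteps the $p_{\text{error}}\log(|\mathcal{Y}|-1)$ tail term appearing in the full Fano inequality and makes $h_2(p)\geq 2p$ available as a tight pointwise comparison. A secondary consideration is the unit convention: the factor $\tfrac{1}{2}$ is calibrated to binary (bit) entropy via $h_2(p)\geq 2p$ on $[0,\tfrac{1}{2}]$, so I would state upfront that all information-theoretic quantities are measured in bits. Modulo these two clarifications, the remaining manipulations are direct applications of standard information-theoretic identities and proceed without further difficulty.
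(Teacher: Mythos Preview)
Your argument is sound under the pointwise assumption you flag, but it takes a different route from the paper. The paper never conditions on $\hat y$ in its entropy step: it proves by induction the lemma $1-p_{\max}\le\tfrac12 H(p_1,\dots,p_m)$, applies it to the posterior $\Pr(y\mid q,t)$ to obtain $p_{\text{error}}\le\tfrac12 H(y\mid q,t)$, and only afterward brings in $\hat y$ through the chain rule and $H(y\mid q,t)=H(y)-I(y;q,t)$. You instead bound $p_{\text{error}}\le\tfrac12 H(y\mid\hat y,t,q)$ via the error-indicator reduction and $h_2(p)\ge 2p$, then decompose. The payoff of your route is that the second step, $H(y\mid\hat y,t,q)=H(y\mid t,q)-I(\hat y;y\mid t,q)\le H(y)-I(\hat y;y\mid t,q)$, is a transparent identity-plus-monotonicity, whereas the paper's passage from $H(y\mid q,t)$ to the same right-hand side is less direct. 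The trade-off is in regularity assumptions: you need $P_e(\hat y,t,q)\le\tfrac12$ at every realization, while the paper, by defining the conditional error as the Bayes rate $1-\max_{y'}\Pr(y=y'\mid q,t)$, is implicitly treating $\hat y$ as the MAP predictor from $(q,t)$. Both proofs therefore carry a mild assumption on $\hat y$; yours simply makes it explicit.
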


The proof can be seen in appendix \ref{sec:proof}. Leveraging the conclusion from \citep{qian2025demystifying}, since $H(y)$ depends solely on the fixed distribution of the answer and is independent of the model's reasoning steps, it can therefore be regarded as a constant.
In our Theorem~\ref{thm:1}, the upper bound of the error probability 
$p_{\text{error}}$ is governed by the conditional mutual information
$
I(\hat{y}; y \mid t, q),
$
which measures the statistical dependence between the predicted output 
$\hat{y}$ and the true label $y$, given the auxiliary analytic plan $t$. In other words, the larger the shared information 
between $\hat{y}$ and $y$ under the guidance of the analytic plan $t$, the tighter the upper achievable limit on the probability of error. \

\begin{proposition}
\label{pro}
Let $t_1$ and $t_2$ be any analytic plans, and let $r_{\mathrm{analytic}}(t_1)$ and $r_{\mathrm{analytic}}(t_2)$ denote their corresponding analytic rewards. Let $\hat{y}_1$ and $\hat{y}_2$ be the answers induced by executing $t_1$ and $t_2$, respectively. If $
r_{\mathrm{analytic}}(t_1) \ge r_{\mathrm{analytic}}(t_2)$,
it holds that
\begin{equation}
H(y| \hat{y}_1,t_1,q) \leq H(y| \hat{y}_2,t_2,q).
\end{equation}
\end{proposition}

\begin{remark}
By the definition of mutual information,  
$
I(\hat{y};y \mid q,t) = H(y \mid q,t) - H(y \mid \hat{y}, q,t).
$ 
Note that $H(y \mid q,t)$ is solely determined by the underlying data distribution of $(q,t,y)$ and is independent of the model’s prediction $\hat{y}$. Hence, $H(y \mid q,t)$ can be regarded as a constant with respect to the learning or inference process. 
As shown in Proposition~\ref{pro}, as the analytic plan reward $r_{\text{analytic}}$ increases, the conditional entropy $H(y \mid \hat{y}, q, t)$ decreases, which in turn implies a larger mutual information $I(y;\hat{y} \mid q,t)$ between the prediction $\hat{y}$ and the ground-truth $y$. In particular, we have $\max_t r_{\text{analytic}}(t) \Longleftrightarrow \max_t I(y;\hat{y} \mid q,t)$. Therefore, optimizing the analytic plan reward $r_{\text{analytic}}$ effectively improves the model’s reasoning ability.

\end{remark}

\myparatights{Empirical Analysis}To assess the reliability of the above theory, we further examine the relationship between mutual information and the plan reward. Following \citep{qian2025demystifying}, we use the Hilbert–Schmidt Independence Criterion (HSIC) to estimate the mutual information between the predicted answer $\hat{y}$ and the ground-truth answer $y$. As shown in Fig.~\ref{fig:mutual}, the plan reward and the mutual information exhibit similar increasing trends, which is consistent with our theoretical claim
$\max_t r_{\text{analytic}}(t) \Longleftrightarrow \max_t I(y;\hat{y} \mid q,t)$
and thus supports its validity. 


\begin{figure*}[htbp]
    \centering
     \begin{subfigure}[b]{0.33\textwidth}
        \centering
        \includegraphics[width=\linewidth]{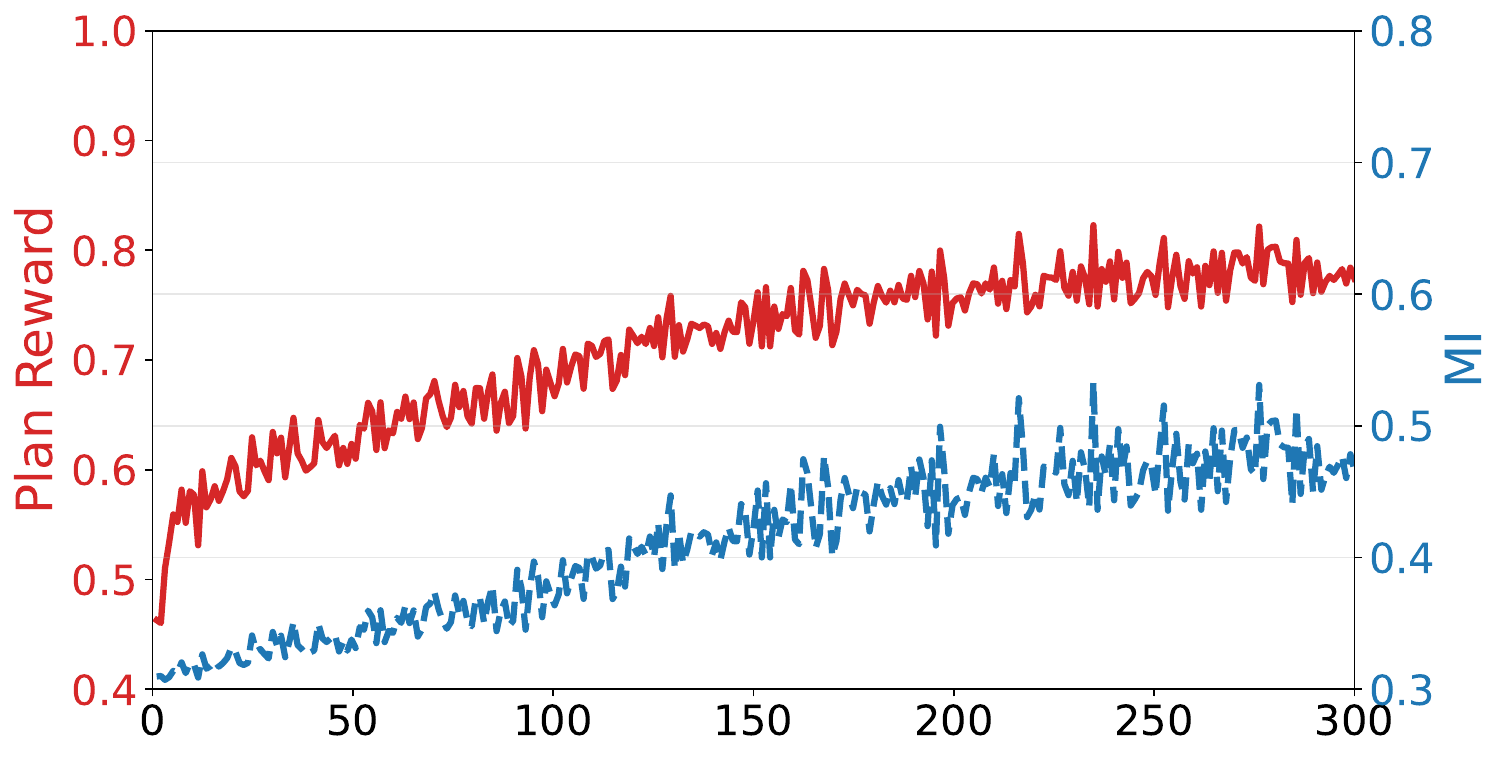}
        \caption{Qwen2.5-7B-Instruct}
    \end{subfigure}
    \hfill
    \begin{subfigure}[b]{0.33\textwidth}
        \centering
        \includegraphics[width=\linewidth]{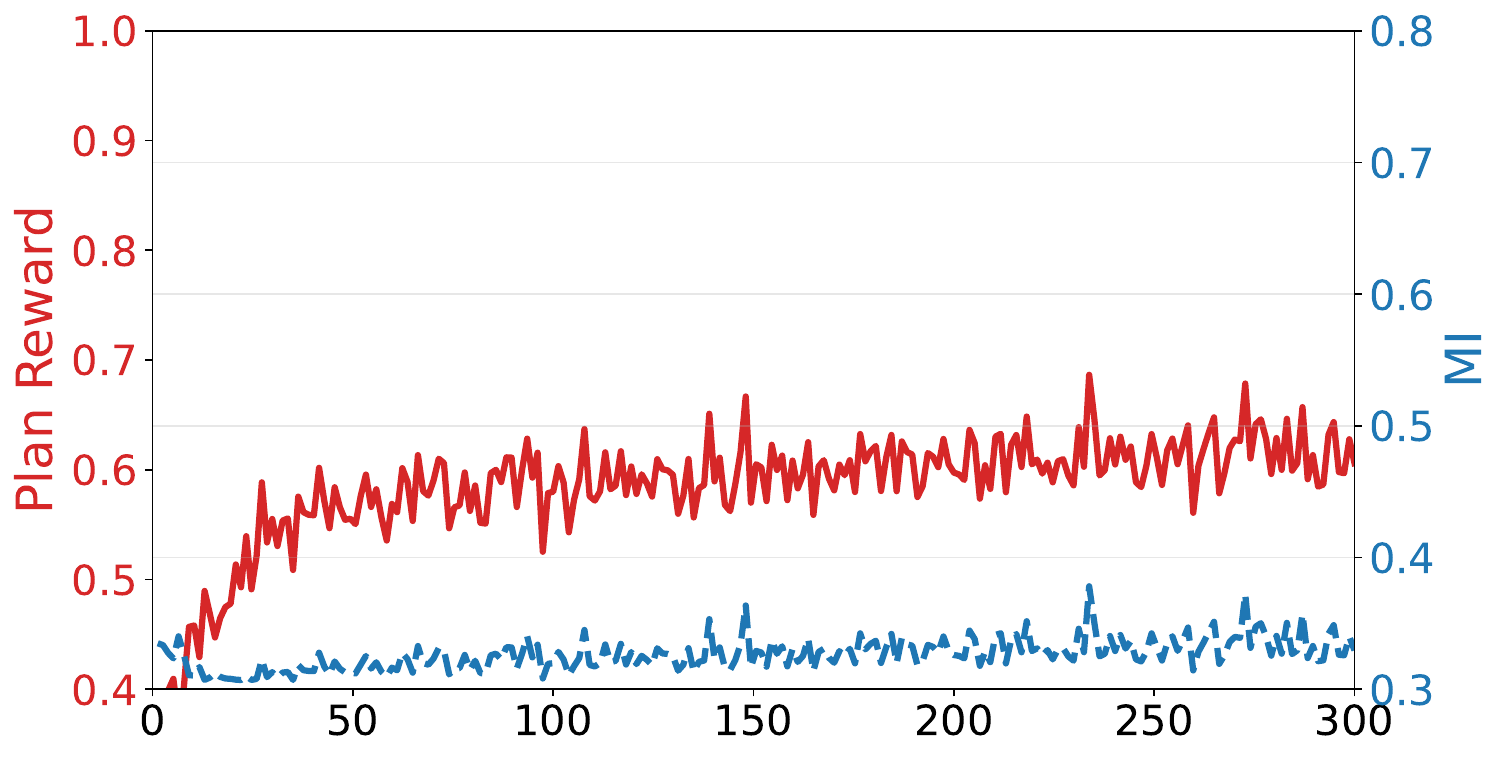}
        \caption{LLaMA3.2-3B}
    \end{subfigure}
    \hfill
    \begin{subfigure}[b]{0.33\textwidth}
        \centering
        \includegraphics[width=\linewidth]{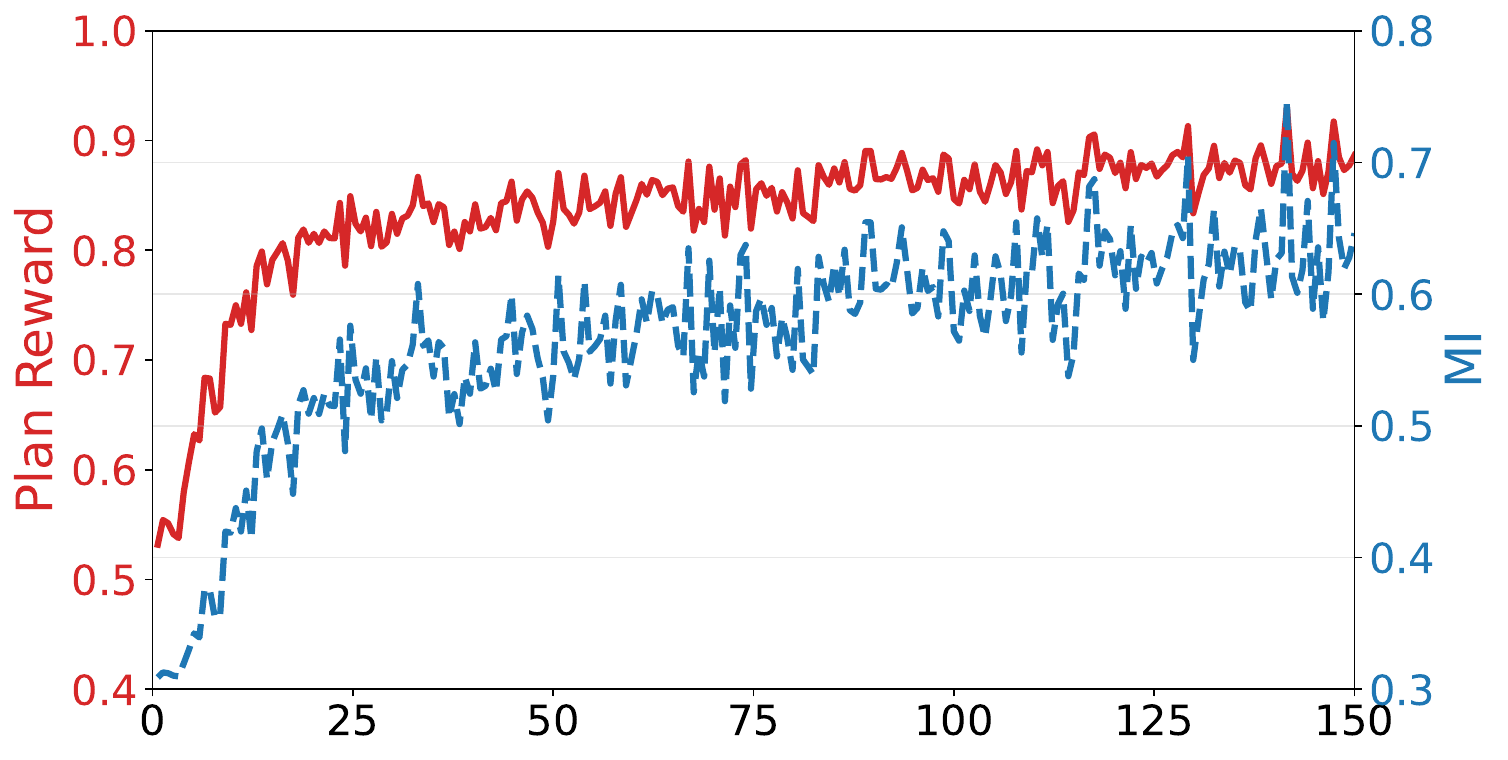}
        \caption{Qwen3-8B}
    \end{subfigure}

    \caption{\small{Trend plots of mutual information and plan reward across different models.}}
    \label{fig:mutual}
\end{figure*}

\begin{table}[h]
\centering
\footnotesize
\caption{\small{Impact of data scale of RL on \alg, where Qwen2.5-7B-Instruct is considered as base model. \textbf{Bold} is best per block.}}
\scalebox{0.8}{ 
\begin{tabular}{c|ccccc}
\midrule[1.2pt]
Data scale & MATH500        & AIME24         & AIME25         & AMC23          & \multicolumn{1}{l}{Average} \\ \midrule
4k         & 82.27          & 27.22          & 21.03          & 65.22          & 48.94                      \\
8k         & 83.59          & 28.23          & 22.29          & 68.29          & 50.60                      \\
11k        & 84.23          & 29.33          & 24.51          & 69.37          & 51.86                      \\
14k        & 85.57 & 30.26 & 25.97 & 70.24 & 53.01
\\
30k &88.45 & 31.45 & 26.29 & 75.29 & 55.37
\\
\rowcolor{blues}
60K & \textbf{94.53} & \textbf{34.53} & \textbf{29.25} & \textbf{77.51} & \textbf{58.46}

\\ \midrule[1.2pt]
\end{tabular}
}
 
\label{tab:data_scale}
\end{table}
\section{Experiment}

\myparatights{Base Models}
To evaluate \alg, we adopt four base models of varying scales and series: LLaMA3.2-3B \citep{dubey2024llama}, Qwen2.5-7B-Instruct \citep{bai2025qwen2}, Qwen3-8B, and Qwen3-14B \citep{yang2025qwen3}, enabling a comprehensive assessment of its robustness across architectures. Training details are in Section~\ref{sec:setup}. For our vision-language model (VLM), we use Qwen2.5-7B-VL \citep{bai2025qwen2} as the base model to extend our experiments.

\myparatights{Training Datasets and Benchmarks}
For SFT, we use 10K samples from Openthoughts \citep{guha2025openthoughts} with injected planning knowledge (Section~\ref{sec:sft}). For RL, we sample 60K problems from DeeMath \citep{he2025deepmath}, which offers graded difficulty and is rigorously decontaminated to avoid benchmark leakage.
We evaluate \alg on AIME24, AIME25, MATH500, and AMC23, and report the average accuracy over 64 independent runs. Besides, we assess it on the general-purpose multimodal datasets MMMU-Pro \citep{yue2025mmmu}, MMMU \citep{yue2024mmmu}, and EMMA \citep{standley2023extensible}, and the scientific benchmark dataset MMK-12 \citep{meng2025mm}.

\begin{table}[H]
\footnotesize
\centering
\caption{\small{Ablation analysis on \alg, where Qwen2.5-7B-Instruct is considered as base model. \textbf{Bold} is best per block.}}
\scalebox{0.70}{
\begin{tabular}{c|ccccc}
\midrule[1.2pt]
Method                                          & MATH500        & AIME24         & AIME25         & AMC23          & \multicolumn{1}{l}{Average} \\ \midrule
\alg$_{\text{w/o SFT}}$          & 84.27          & 20.53          & 18.39          & 62.13          &  46.33                     \\
\alg$_{\text{w/o $ r_{\text{format}}$}}$   & 91.19          & 33.75 & 28.74          & 75.58          &  57.32                      \\
\alg$_{\text{w/o  $r_{\text{analytic}}$}}$ & 88.76          & 29.48          & 25.53          & 74.47         & 54.56                       \\
\rowcolor{blues}
\alg                             & \textbf{92.53} & \textbf{34.53}          & \textbf{29.25} & \textbf{77.51} & \textbf{58.46}              \\ \midrule[1.2pt]
\end{tabular}
}

\label{tab:abaltion}
\end{table}

\myparatights{Baseline}
We compare \alg with the base model and several RLVR like GRPO \citep{shao2024deepseekmath}, and DAPO \citep{yu2025dapo}. In addition, we compare our method with several advanced reinforcement learning algorithms, including CPL \citep{wang2024cpl}, Full-Step DPO \citep{xu2025full}, and ORZ \citep{hu2025open}. For fairness, all methods use the same SFT and RL data (differing only in the improved SFT portion). For a fair comparison, we use the same number of sampled responses as all selected RLVR methods, so their time consumption is nearly the same.



\subsection{Performance of \alg}

Table~\ref{tab:main} shows that our method (\alg) consistently outperforms both the base models and other RLVR approaches across different model scales and evaluation benchmarks. For relatively weaker backbones such as Qwen2.5-7B-Instruct and LLaMA3.2-3B, \alg delivers the most significant improvements, raising the average scores by over 20 points compared to the raw models and further surpassing GRPO and DAPO by clear margins. \alg demonstrates strong effectiveness. Detailed analysis is provided in Appendix~\ref{sec:sta_ana}.

\begin{table}[h]
\tiny
\caption{\small{The impact of datasets containing analytic planning on SFT. \textbf{Bold} is best per block.}}
\makebox[\linewidth]{ 
\scalebox{0.8}{
\begin{tabular}{c|c|ccccc}
\midrule[1.2pt]
Base Model                           & Method                              & MATH500        & AIME24         & AIME25         & AMC23          & Average \\ \midrule
\multirow{2}{*}{Qwen2.5-7B-Instruct} & SFT w/o planning & 78.28          & 21.66          & 19.66          & 60.53          & 45.03   \\
                                     & \cellcolor{blues} SFT w/ planning  & \cellcolor{blues}\textbf{80.40} & \cellcolor{blues}\textbf{25.25} & \cellcolor{blues}\textbf{20.33} & \cellcolor{blues}\textbf{63.75} & \cellcolor{blues}\textbf{47.43}   \\ 
                                     \midrule
\multirow{2}{*}{Qwen3-8B}            &  SFT w/o planning & 91.02          & 70.03          & 50.25          & 92.39          & 75.92   \\ 
                                     & \cellcolor{blues} SFT w/ planning  & \cellcolor{blues}\textbf{92.53} & \cellcolor{blues}\textbf{71.97} & \cellcolor{blues}\textbf{51.77} & \cellcolor{blues}\textbf{93.55} & \cellcolor{blues}\textbf{77.46}   \\ \midrule[1.2pt]
\end{tabular}
}}
\label{tab:sft}
\end{table}

\subsection{Impact of RL Data Scaling}

Table \ref{tab:data_scale} shows that, with Qwen2.5-7B-Instruct as the base model, scaling the RL training data from 4k to 60k leads to consistent improvements for \alg across four math benchmarks (MATH500 / AIME24 / AIME25 / AMC23), indicating that more RL data yields stable performance gains.
Notably, the improvement is not strictly linear: the gains are stronger at some stages and smaller at others, while a larger-scale setting still brings a clear boost. This suggests that scaling up the RL data continues to unlock additional capability rather than quickly saturating.

\subsection{Ablation Analysis}

Table~\ref{tab:abaltion} presents an ablation study of \alg on Qwen2.5-7B-Instruct. The full \alg achieves the best performance across all benchmarks (58.46 average), while removing SFT causes a substantial drop (46.33). Disabling either $r_{\text{format}}$ or $r_{\text{analytic}}$ also consistently hurts performance, with a larger degradation observed when removing $r_{\text{analytic}}$, indicating that both rewards contribute to final gains.

\begin{table}[h]
\tiny
\centering
\caption{Comparison between PTA-GRPO and other approaches on General-Benchmark and Science Benchmark, using Qwen2.5-7B-VL as the base model.}
\begin{tabular}{c|cccccc}
\midrule[1.2pt]
Method        & MMMU-Pro      & MMMU          & EMMA & Phys          & Chem          & Bio           \\ \hline
\textbf{Base} & 36.9          & 54.3          & 21.5 & 45.4          & 56.4          & 54.0          \\
SRPO          & 42.3          & 57.1          & 29.6 & 56.2          & 65.2          & 65.2          \\
\cellcolor{blues}PTA-GRPO      & \cellcolor{blues}\textbf{44.7} & \cellcolor{blues}\textbf{59.0} & \cellcolor{blues}\textbf{31.9} & \cellcolor{blues}\textbf{58.5} & \cellcolor{blues}\textbf{68.7} & \cellcolor{blues}\textbf{66.8} \\ \midrule[1.2pt]
\end{tabular}
\label{narrow}
\end{table}

\subsection{Impact of Analytic Plan on SFT}

Table \ref{tab:sft} compares standard SFT (w/o planning) with SFT on $\mathcal{D}_{\text{SRCS}}$ augmented by analytic plans (w/ planning). Across both base models, SFT with analytic planning consistently outperforms SFT without planning on all four benchmarks: for Qwen2.5-7B-Instruct, the average score rises from 45.03 to 47.43 (+2.40), and for Qwen3-8B, from 75.92 to 77.46 (+1.54), with gains appearing simultaneously on MATH500, AIME24/25, and AMC23 rather than as isolated fluctuations. This suggests that plans provide an important supervision signal, teaching the model transferable solution organization and reasoning trajectories that yield robust improvements across tasks and models.

\subsection{Empirical Evaluation on Generalization}

Beyond mathematics, we also evaluate on multimodal (Table \ref{narrow}), general-domain, and scientific benchmarks, including MMMU-Pro \citep{yue2025mmmu}, MMMU \citep{yue2024mmmu}, EMMA \citep{standley2023extensible}, and MMK-12 \citep{meng2025mm}. Using SRPO \citep{wan2025srpo} as baselines and following SRPO’s SFT/RL data for cold-start and training, PTA-GRPO with Qwen2.5-7B-VL consistently outperforms the Base model and SRPO on MMMU-Pro, MMMU, EMMA, and Phys/Chem/Bio benchmarks, achieving uniformly better metrics and stronger generalization reasoning.

\subsection{Results of test-time scaling}
We next examine the effectiveness of \alg under multiple sampling at test time. As shown in Fig. \ref{fig:topk}, \alg consistently outperforms GRPO on the AIME2025 dataset across Pass@1, Pass@4, Pass@8, and Pass@16. This demonstrates that \alg maintains high precision under low-sample conditions, while further exhibiting stronger solution coverage as the number of samples increases.
\begin{figure}[H] 
  \centering
  \includegraphics[width=0.60\linewidth]{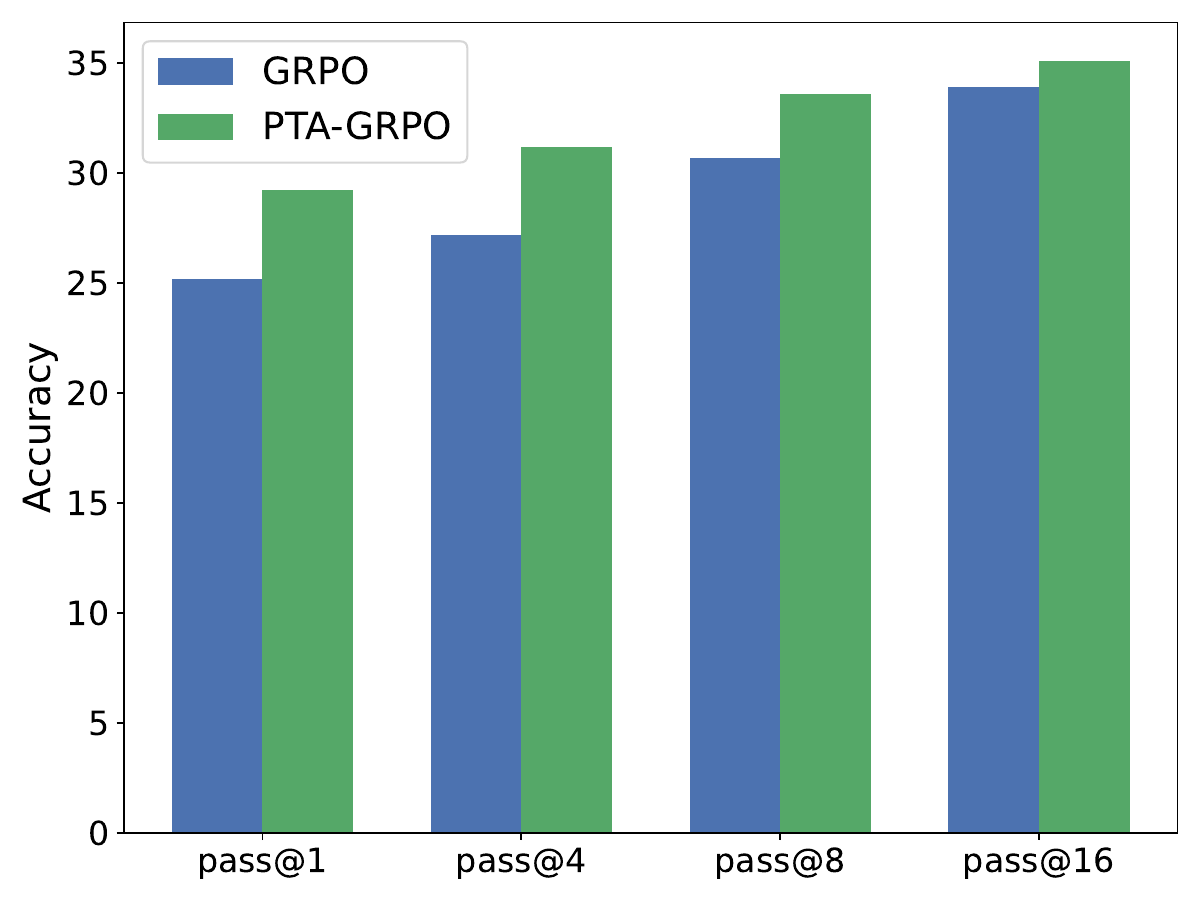}
  \caption{\small{Effect of scaling test-time compute on AIME25 (Pass@K), with Qwen2.5-7B-Instruct as the base model.}}
  \label{fig:topk}                     
\end{figure}

\subsection{Sensitivity analysis}
\label{sec:sens}

From Table \ref{table:senes}, it can be observed that the parameter $\beta$ has a notable impact on the overall performance of the algorithm \alg, while the sensitivity varies across different datasets. Overall, the best Average performance (58.46) is achieved when $\beta = 0.5$, indicating that this setting provides a favorable balance across multiple tasks.

Specifically, performance on MATH500 shows a trend of first increasing and then decreasing as $\beta$ grows, reaching its peak at $\beta = 0.7$ (93.07), which suggests that this task benefits from a relatively larger $\beta$. In contrast, AIME24 and AIME25 achieve their best results at $\beta = 0.5$, and their performance drops noticeably when $\beta$ is either increased or decreased, indicating higher sensitivity to the parameter choice. For AMC23, the highest score is obtained at a smaller value, $\beta = 0.3$ (77.98), after which the performance gradually declines as $\beta$ increases, reflecting a different preference from the other datasets.

\begin{table}[h]
\caption{Sensitivity analysis of \alg}
\centering
\footnotesize
\begin{tabular}{c|ccccc}
\midrule[1.2pt]
$\beta$ & MATH500        & AIME24         & AIME25         & AMC23          & Average        \\ \hline
0.3     & 91.95          & 33.28          & 28.57          & \textbf{77.98} & 57.95          \\
0.5     & 92.53          & \textbf{34.53} & \textbf{29.25} & 77.51          & \textbf{58.46} \\
0.7     & \textbf{93.07} & 30.51          & 26.29          & 75.45          & 56.33          \\
0.9     & 89.04          & 28.82          & 27.53          & 74.48          & 54.97          \\ \midrule[1.2pt]
\end{tabular}

\label{table:senes}
\end{table}

In summary, the choice of $\beta$ involves a trade-off among different tasks: larger $\beta$ values favor MATH500, while a moderate setting ($\beta = 0.5$) yields the most robust overall performance. Therefore, when considering multiple datasets simultaneously, $\beta = 0.5$ represents a reasonable choice.

To provide a more comprehensive evaluation, we conduct a detailed sensitivity analysis of the number of plans $m$ and the number of CoT trajectories $z$ sampled per plan. The full results and discussions are presented in Appendix \ref{sec:sens_m_z}.
The sensitivity results for parameter $m$ are shown in Fig.~\ref{fig:placeholder}, whereas the analysis for parameter $z$ is summarized in Table~\ref{tab:sen_cot}.



\section{Conclusion}
We propose Plan-Guide Enhanced Reasoning with Group Relative Policy Optimization (\alg), which integrates high-level planning with fine-grained reasoning to alleviate the lack of global planning in traditional CoT reasoning. Experimental results show that \alg achieves significant improvements across multiple mathematical reasoning benchmarks and model scales, validating its effectiveness and generalization.


\section*{Impact Statement}
This paper introduces PTA-GRPO, a two-stage training framework that explicitly optimizes high-level analytic planning in large language model (LLM) reasoning. By incorporating plan quality as a reinforcement learning objective alongside final-answer correctness, the proposed method aims to improve global reasoning coherence, reduce redundant or inconsistent reasoning trajectories, and enhance generalization across tasks and modalities.

From a positive perspective, improving structured planning and reasoning may contribute to more reliable LLM behavior in domains that require multi-step problem solving, such as mathematics, science education, programming, and multimodal reasoning. By encouraging models to produce explicit intermediate plans and better-aligned reasoning trajectories, our method may also improve interpretability compared to purely outcome-based reinforcement learning approaches.

However, stronger reasoning and planning capabilities can also amplify potential risks associated with advanced LLM systems. More effective structured reasoning may enable models to solve increasingly complex tasks, which could be misused in contexts involving academic dishonesty, automated generation of misleading content, or assistance in harmful problem-solving scenarios. Although our work focuses on benchmarked reasoning tasks in mathematics and science, the underlying techniques are general and could be applied to broader domains.

Importantly, PTA-GRPO does not introduce new data sources, external tools, or autonomous capabilities beyond standard language model training. It modifies the training objective to better align intermediate planning with correct outcomes. The method remains compatible with existing alignment, safety filtering, and content moderation techniques. We believe that responsible deployment, continued alignment research, and domain-specific safeguards are necessary to mitigate misuse risks as reasoning capabilities improve.

Overall, this work aims to advance research on structured reasoning in LLMs while acknowledging that improvements in reasoning power should be accompanied by careful consideration of downstream societal impacts and appropriate governance mechanisms.

\bibliography{example_paper}
\bibliographystyle{icml2026}
\balance

\newpage
\appendix 
\onecolumn

\appendix
\clearpage

\section{Appendix}

\subsection{Code Implementation}

The code can be founded at \href{https://anonymous.4open.science/r/icml26-1EE8/}{\textbf{\textcolor{red}{Code}}}.

\subsection{Details of Format Reward}
\label{sec:format}

\myparatights{Format Reward}
The format reward $r_{\text{format}}$ is designed to regulate the overall structure of the model response, ensuring both conformity to the desired format and control over the output length. 
It consists of two components: $r_{\text{structure}}$ and $r_{\text{length}}$. 
Specifically, $r_{\text{structure}}$ enforces that the policy model’s response adheres to the predefined structural template, i.e., 
\texttt{<plan>...</plan>}, \texttt{<think>...</think>}, and \texttt{<answer>...</answer>}. 
Meanwhile, $r_{\text{length}}$ serves as an auxiliary reward that encourages the model to generate concise and efficient token sequences, thereby reducing redundant or uninformative content.

To provide a clearer illustration of each reward, we present its detailed formulation as follows. 
We begin with the format reward $r_{\text{format}}$, which is defined as: 
\begin{equation}
r_{\text{format}} = 
\begin{cases}
0.2, & \text{if the response strictly} \\ &\text{ follows the predefined template} \\
0, & \text{otherwise}.
\end{cases}
\end{equation}
This function enforces a binary constraint on the output structure: a full reward is granted only when the response strictly adheres to the predefined template, thereby ensuring the consistency and parsability of the generated results.

For response length, the optimal number of tokens varies across different questions, 
making it difficult to predefine a fixed target length. 
Therefore, for all responses generated for a given question, 
we select the shortest correct response length as the reference length $T$, defined as:
\begin{equation}
T = \min \{\, |\{t_i,c_{i,k}\}| \;\mid\; \hat{y}_{i,k} = y \,\},
\end{equation}
where $|\{t_i,c_{i,k}\}|$ denotes the token length of response $\{t_i,c_{i,k}\}$.
Here, $T$ represents the shortest executable token length required to obtain the correct answer to a given question. 
It can be regarded as the optimal reference length under current knowledge, 
toward which other correct responses should converge in order to minimize redundancy while preserving correctness.
For each response $\{t_i,c_{i,k}\} \in G$, the length reward $r_{\text{length}}$ can be expressed as:
\begin{equation}
r_{\text{length}}(\{t_i,c_{i,k}\}) = \alpha \cdot \text{exp}(- \frac{\left| |\{t_i,c_{i,k}\}|-T\right|}{T_{\text{max}}-T}),
\end{equation}
where $\alpha$ is a hyperparameter, and $T_{\text{max}}$ does not denote the maximum output length set for the policy model. The reward becomes larger as the response length approaches the reference length $T$, encouraging the model to generate concise yet correct responses. 

The format reward $r_{\text{format}}$, defined as $r_{\text{format}} = r_{\text{structure}} + r_{\text{length}}$, ensures that the output not only adheres to the required format, but also guarantees the conciseness of the output response.

\subsection{RL training time for various RLVR approaches}
\label{sec:time}

Using Qwen2.5-7B-Instruct as the base model, the training time of GRPO is 44.7 hours, DAPO requires 47.5 hours, and \alg takes 44.9 hours. When using Qwen3-8B as the base model, GRPO requires 59.7 hours, DAPO 66.9 hours, and \alg 61.4 hours. These results indicate that PTA-GRPO does not introduce a noticeable increase in training time compared to existing RLVR-based methods.

\subsection{Impact Results of Self-Generated Plans in a Policy Model}
\label{sec:self}

\begin{table*}[]
\centering
\caption{Comparative Analysis of Self-Generated Plans in a Policy Model}
\begin{tabular}{c|ccccc}
\midrule[1.2pt]
Method                            & MATH500        & AIME24         & AIME25         & AMC23          & Average        \\ \hline
\alg  w/ self-plan & 92.18 & 32.79 & 28.57 & 75.94 & 57.37               \\
\alg               & 92.53 & {34.53} & 29.25 & {77.51} & {58.46} \\ \midrule[1.2pt]
\end{tabular}
\label{tab:self}
\end{table*}

Table~\ref{tab:self} compares the performance of \alg with and without self-generated plans, where `\alg w/ self-plan` denotes that the plan is summarized and generated solely by the policy model without relying on an advanced LLM for planning. The results show that the two settings exhibit highly consistent performance across all benchmarks: the difference on MATH500 is negligible, while slight performance gaps appear on more challenging tasks such as AIME24, AIME25, and AMC23, though the margins remain limited. In terms of average performance, `\alg w/ self-plan` achieves a score of 57.37, which is close to the full version's 58.46. These findings indicate that even without explicit plan generation from an advanced LLM, \alg can maintain stable and competitive performance by leveraging the policy model’s self-generated planning capability, demonstrating the intrinsic effectiveness and practical robustness of the proposed approach.

\subsection{Training Dynamics of \alg}
 \label{sec:training dynamic}
 
 Fig.~\ref{fig:qwen3-8b} and Fig.~\ref{fig:qwen25-7b} illustrate the training dynamics of QWEN3-8B and QWEN2.5-7B-Instruct, respectively. As shown in the figures, our method outperforms GRPO in terms of accuracy reward and response length, indicating the effectiveness of the introduced component. It is worth noting that in Fig.~\ref{fig:qwen3-8b} (b), our approach achieves lower entropy compared to GRPO. This suggests that for stronger models, our method encourages the development of more reasonable analytic plans, enabling the model to complete a given trajectory with greater confidence and ultimately achieving higher accuracy.

\begin{figure}[H]
    \centering
    \begin{subfigure}{0.32\textwidth}
        \centering
        \includegraphics[width=\linewidth]{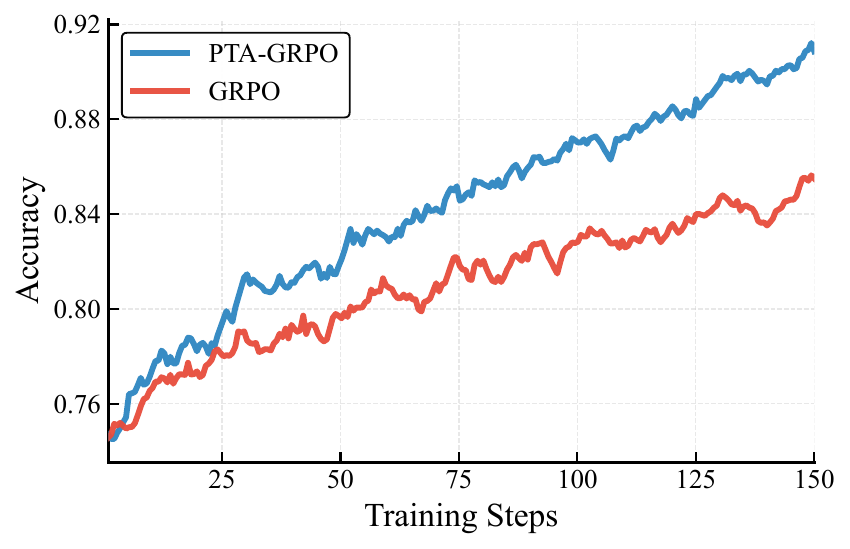}
        \caption{Accuracy Reward}
    \end{subfigure}
    \hfill
    \begin{subfigure}{0.32\textwidth}
        \centering
        \includegraphics[width=\linewidth]{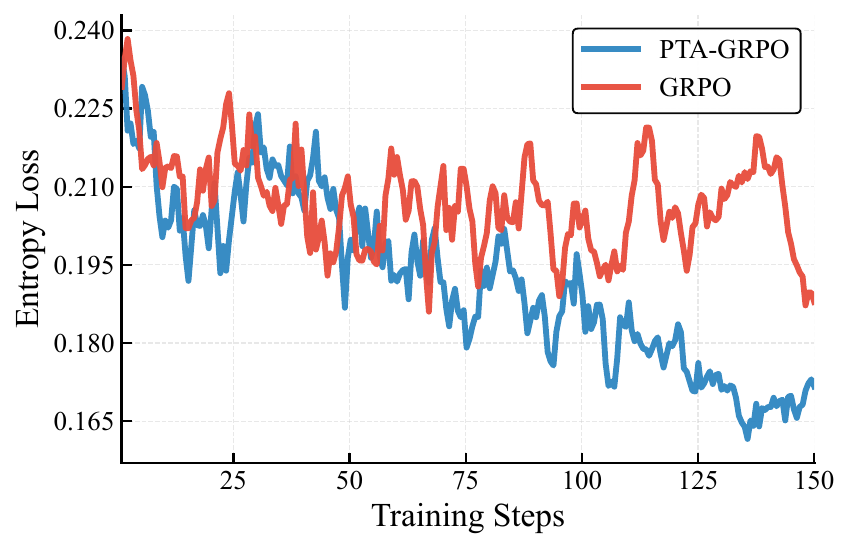}
        \label{fig:8b_entro}
        \caption{Policy Entropy}
    \end{subfigure}
    \hfill
    \begin{subfigure}{0.32\textwidth}
        \centering
        \includegraphics[width=\linewidth]{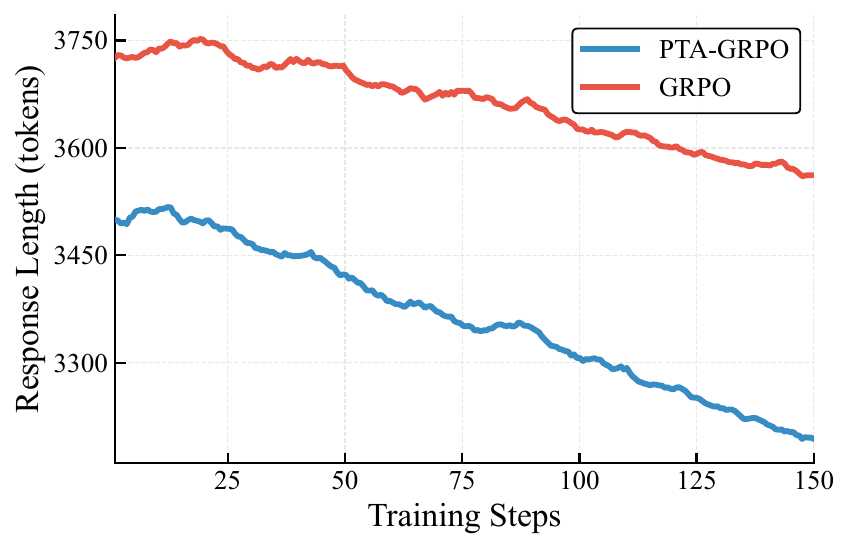}
        \caption{Response Length}
    \end{subfigure}

    \caption{\small{Training Dynamics of \alg with Qwen3-8B.}}
    
    \label{fig:qwen3-8b}
\end{figure}

\begin{figure}[H]
    \centering
    \begin{subfigure}{0.32\textwidth}
        \centering
        \includegraphics[width=\linewidth]{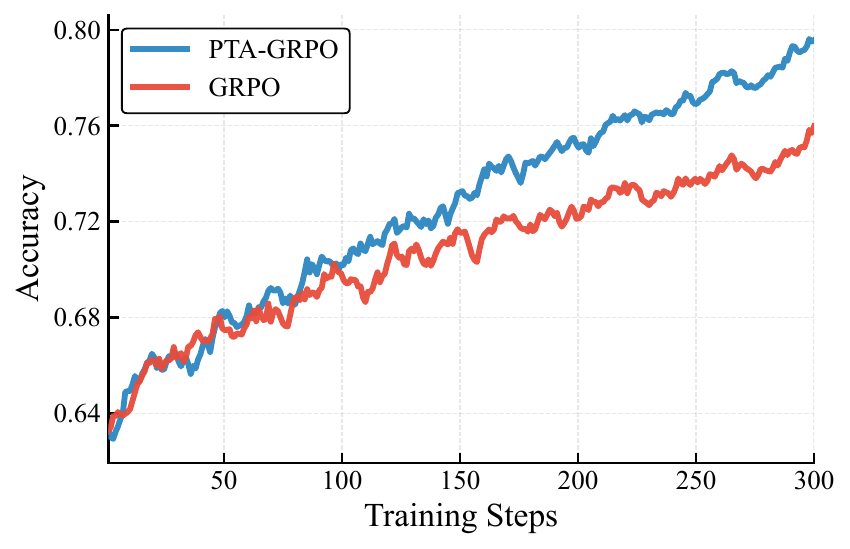}
        \caption{Accuracy Reward}
    \end{subfigure}
    \hfill
    \begin{subfigure}{0.32\textwidth}
        \centering
        \includegraphics[width=\linewidth]{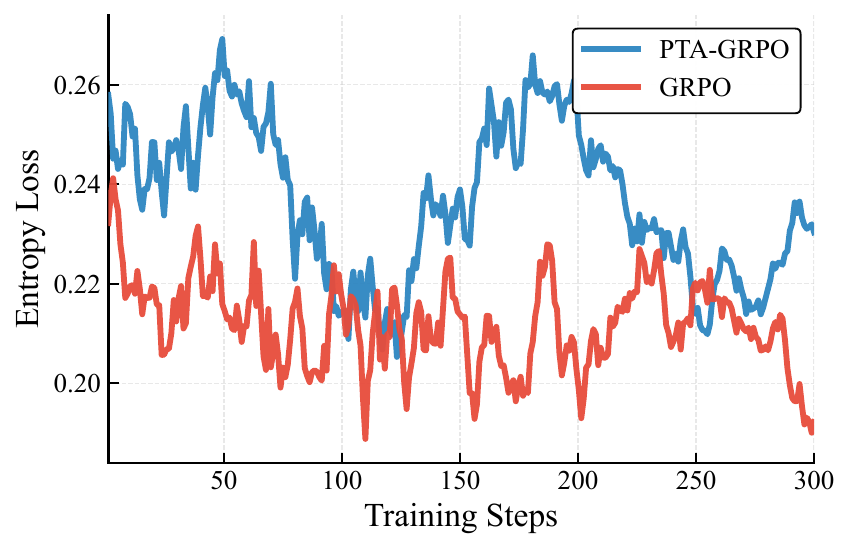}
        \caption{Policy Entropy}
    \end{subfigure}
    \hfill
    \begin{subfigure}{0.32\textwidth}
        \centering
        \includegraphics[width=\linewidth]{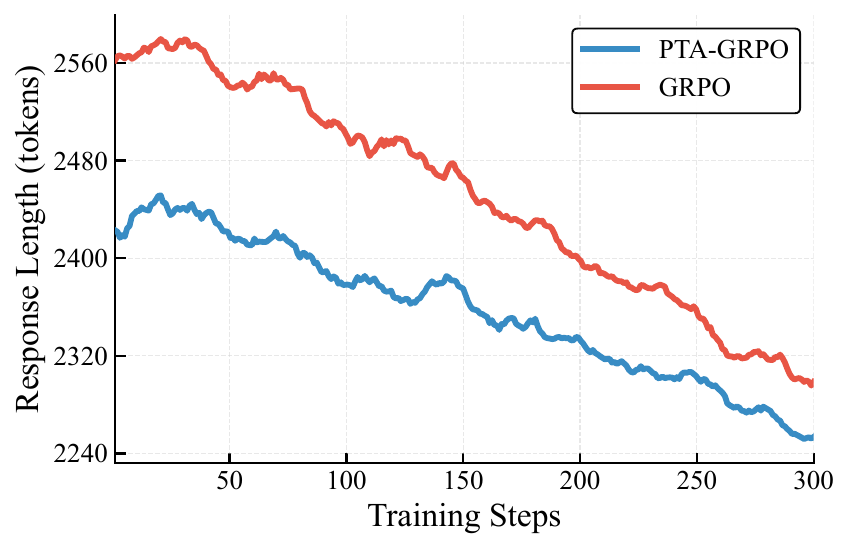}
        \caption{Response Length}
    \end{subfigure}

    \caption{\small{Training Dynamics of \alg with Qwen2.5-7B-Instruct.}  }
    \label{fig:qwen25-7b}
\end{figure}

\subsection{Sensitivity analysis of sampled number}
\label{sec:sens_m_z}

\begin{figure}[H]
    \centering
    \includegraphics[width=0.95\linewidth]{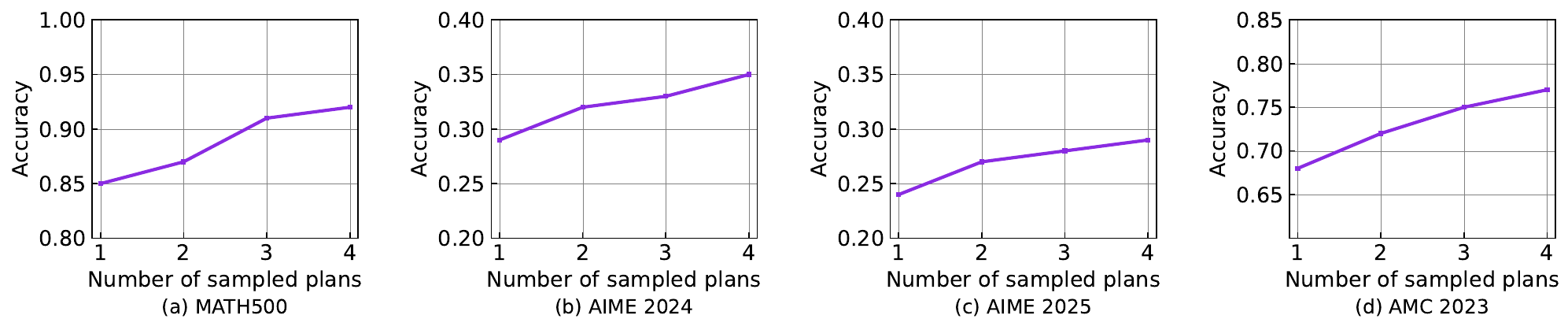}
    \caption{Sensitivity analysis with respect to the number of analytic plans $m$, where Qwen2.5-7B-Instruct is considered as based model. }
    \label{fig:placeholder}
\end{figure}

\begin{table}[H]
\centering
\caption{Sensitivity analysis with respect to the number of sampled CoTs $z$ for per analytic plan, using Qwen2.5-7B-Instruct as the base model.}
\begin{tabular}{c|ccccc}
\midrule[1.2pt]
$z$ & MATH500 & AIME 2024 & AIME 2025 & AMC 23 & Average \\ \hline
1   & 85.79   & 29.21     & 25.95     & 69.97  & 52.73   \\
2   & 89.29   & 33.29     & 27.77     & 74.73  & 56.27   \\
3   & 92.53   & 34.53     & 29.25     & 77.51  & 58.46   \\ \midrule[1.2pt]
\end{tabular}
\label{tab:sen_cot}
\end{table}

\subsection{Analysis of Statistical Significance}
\label{sec:sta_ana}

Table \ref{tab:pairwise_grpo_50k_est} reports pairwise significance tests comparing PTA-GRPO with the GRPO baseline across multiple models and mathematical reasoning benchmarks. Overall, PTA-GRPO yields consistent and statistically significant improvements, with 13 out of 16 model–task pairs achieving significance at the 5\% level or better. The gains are particularly pronounced for smaller models, such as Qwen2.5-7B and LLaMA3.2-3B, where PTA-GRPO improves performance across all tasks with large effect sizes and strong statistical significance (all $p<0.001$). For larger models (Qwen3-8B and Qwen3-14B), the improvements remain positive on most benchmarks but become more modest, and in some cases statistically insignificant, indicating diminishing returns as model capacity increases. Notably, the benefits of PTA-GRPO are most consistent on harder benchmarks (AIME24 and AIME25), where all evaluated models exhibit significant gains, suggesting that PTA-GRPO is particularly effective for complex, long-horizon mathematical reasoning. In contrast, on relatively easier tasks such as AMC23 and MATH500, the improvements are smaller and occasionally negligible, reflecting a saturation effect rather than a systematic degradation.

\begin{table}[h]
\centering
\caption{\small{
Pairwise significance tests between PTA-GRPO and GRPO.
Each row reports the improvement $\Delta$ (in percentage points), the $p$-value, and the corresponding significance level. $^*$, $^{**}$, and $^{***}$ denote $p<0.05$, $p<0.01$, and $p<0.001$, respectively; ``ns'' indicates not significant. ($p$-values are re-estimated using a normal-approximation scaling with $\sqrt{n}$ and $\Delta$.)
}}
\renewcommand{\arraystretch}{1.15}
\scalebox{0.60}{
\begin{tabular}{l|l|l|c|c}
\midrule[1.2pt]
\textbf{Model} & \textbf{Baseline} & \textbf{Task} & \textbf{$\Delta$ (pt)} & \textbf{$p$ / sig} \\
\midrule
\multirow{4}{*}{\textbf{Qwen2.5-7B}}
& GRPO & MATH500 & +1.86 & 0.0000$^{***}$ \\
& GRPO & AIME24  & +4.38 & 0.0000$^{***}$ \\
& GRPO & AIME25  & +4.05 & 0.0000$^{***}$ \\
& GRPO & AMC23   & +4.30 & 0.0000$^{***}$ \\
\multirow{4}{*}{\textbf{LLaMA3.2-3B}}
& GRPO & MATH500 & +4.02 & 0.0000$^{***}$ \\
& GRPO & AIME24  & +5.37 & 0.0000$^{***}$ \\
& GRPO & AIME25  & +2.46 & 0.0000$^{***}$ \\
& GRPO & AMC23   & +3.11 & 0.0000$^{***}$ \\
\multirow{4}{*}{\textbf{Qwen3-8B}}
& GRPO & MATH500 & -0.54 & 0.0275$^{*}$ \\
& GRPO & AIME24  & +1.11 & 0.0012$^{**}$ \\
& GRPO & AIME25  & +1.20 & 0.0003$^{***}$ \\
& GRPO & AMC23   & +0.26 & 0.4090$^{ns}$ \\
\multirow{4}{*}{\textbf{Qwen3-14B}}
& GRPO & MATH500 & +3.67 & 0.0000$^{***}$ \\
& GRPO & AIME24  & +1.82 & 0.0000$^{***}$ \\
& GRPO & AIME25  & +1.96 & 0.0000$^{***}$ \\
& GRPO & AMC23   & +0.30 & 0.0741$^{ns}$ \\
\midrule[1.2pt]
\end{tabular}
}
\label{tab:pairwise_grpo_50k_est}
\end{table}

\subsection{Group Relative Policy Optimization (GRPO)}
\label{sec:grpo}

Group Relative Policy Optimization (GRPO) is a state-of-the-art Reinforcement Learning with Verifiable Rewards (RLVR) algorithm that simplifies Proximal Policy Optimization (PPO) \citep{schulman2017proximal} by removing the need for a value model to estimate the baseline advantage, and has demonstrated remarkable success in enhancing the reasoning abilities of LLM.
Formally, let $Q$ denote the set of questions, $\pi_{\theta_{\text{old}}}$ be the current policy model, and $\{\mathfrak{o}_i\}_{i=1}^N$ represent a collection of $N$ candidate responses sampled for a question $q \in Q$. We also define $\pi_{\theta_{\text{ref}}}$ as a fixed reference model. The training objective of GRPO is expressed as:

\begin{equation}
\begin{aligned}
& J_{\text{GRPO}}(\theta)
= \mathbb{E}_{q \sim Q,\; \{\mathfrak{o}_i\}_{i=1}^N \sim \pi_{\theta_{\text{old}}}}
\\
&\Bigl[
\frac{1}{N} \sum_{i=1}^N \sum_{t=1}^{|\mathfrak{o}_i|}
\min\Bigl(
\frac{\pi_{\theta}(\mathfrak{o}_{i}^t|q)}{\pi_{\theta_{\text{old}}}(\mathfrak{o}_{i}^t|q)} A_i,\,
\text{clip}\Bigl(
\frac{\pi_{\theta}(\mathfrak{o}_{i}^t|q)}{\pi_{\theta_{\text{old}}}(\mathfrak{o}_{i}^t|q)},
1-\epsilon, 1+\epsilon
\Bigr) A_i
\Bigr)
\\
&\qquad
- \beta D_{\text{KL}}(\pi_\theta \| \pi_{\text{ref}})
\Bigr]
\end{aligned}
\label{grpo:fun}
\end{equation}

where $\epsilon$ controls the clipping range and $\beta$ weights the KL regularization term.
The normalized advantage $A_i$ assigned to each response $\mathfrak{o}_i$ is computed from group-based rewards:

\begin{equation}
A_i = \frac{r_i - \mu}{\sigma},
\quad \text{with } \mu = \frac{1}{N}\sum_{j=1} r_j,
\quad \sigma = \sqrt{\frac{1}{N}\sum_{j=1}^N (r_j - \mu)^2},
\label{advantages}
\end{equation}
where $\{r_1, r_2, \dots, r_N\}$ are the scalar rewards associated with the response group $\{\mathfrak{o}_i\}_{i=1}^N$.

In GRPO, each response $\mathfrak{o} \in \{\mathfrak{o}_i\}_{i=1}^N$ consists of a CoT $c$ together with its final answer. As noted in Section~\ref{sub:reason}, token-level MDPs lack global planning and often yield redundant steps, while GRPO rewards $r$ corresponding to $\mathfrak{o}$ focus only on final answer correctness, overlooking reasoning quality and enabling reward hacking through superficial or verbose CoTs.

\subsection{Theoretical proof}
\label{sec:proof}

\begin{proof}
We consider the binary setting, where the ground-truth answer
$y\in\{0,1\}$ and $\hat y$ denotes a (possibly stochastic) predictor based on $(t,q)$. We set $0$ is wrong and $1$ is correct.
All logarithms are taken base $2$.

Fix $(t,q)$. Define the conditional (Bayes) error probability
\[
p_c(t,q) := \Pr(\tilde y \neq y \mid t,q),
\]
where $\tilde y$ denotes the Bayes-optimal (MAP) decision rule given $(\hat y,t,q)$, i.e.,
\[
\tilde y := \arg\max_{y'\in\{0,1\}} \Pr(y=y' \mid \hat y,t,q).
\]
For convenience, define the conditional posterior
\[
\eta(\hat y;t,q) := \Pr(y=1 \mid \hat y,t,q).
\]
Since $y$ is binary, the conditional Bayes error given $(\hat y,t,q)$ is
\[
\Pr(\tilde y \neq y \mid \hat y,t,q)
=
\min\{\eta(\hat y;t,q),\,1-\eta(\hat y;t,q)\}.
\]

Let $h_2(p) := -p\log p-(1-p)\log(1-p)$ denote the binary entropy function.
For any $p\in[0,1]$, it holds that
\[
h_2(p)\ge 2\min\{p,1-p\}.
\]
Applying this inequality with $p=\eta(\hat y;t,q)$ yields
\[
2\min\{\eta(\hat y;t,q),1-\eta(\hat y;t,q)\}
\le
h_2(\eta(\hat y;t,q)).
\]
Taking expectation over $\hat y$ conditional on $(t,q)$, we obtain
\[
p_c(t,q)
\le
\frac12\,\E\!\left[h_2(\eta(\hat y;t,q))\mid t,q\right].
\]

By the definition of conditional entropy for a binary random variable,
\[
\E\!\left[h_2(\eta(\hat y;t,q))\mid t,q\right]
=
H(y\mid \hat y,t,q).
\]
Therefore,
\begin{align}
p_c(t,q)
\le
\frac12\,H(y\mid \hat y,t,q).
\label{eq:pc_entropy_bound}
\end{align}

By definition of conditional mutual information,
\[
I(\hat y;y\mid t,q)
=
H(y\mid t,q)-H(y\mid \hat y,t,q).
\]
Rearranging gives
\[
H(y\mid \hat y,t,q)
=
H(y\mid t,q)-I(\hat y;y\mid t,q).
\]
Substituting into \eqref{eq:pc_entropy_bound}, we obtain
\[
p_c(t,q)
\le
\frac12\Big(H(y\mid t,q)-I(\hat y;y\mid t,q)\Big).
\]

Taking expectation over $(t,q)$ yields
\begin{align*}
p_{\mathrm{error}}
&=
\E_{t,q}\big[p_c(t,q)\big] \\
&\le
\frac12\Big(H(y\mid t,q)-I(\hat y;y\mid t,q)\Big).
\end{align*}
Finally, since conditioning reduces entropy, $H(y\mid t,q)\le H(y)$, we conclude that
\[
p_{\mathrm{error}}
\le
\frac12\Big(H(y)-I(\hat y;y\mid t,q)\Big).
\]
This completes the proof.
\end{proof}

\subsection{Proof of proposition \ref{pro}}

\begin{proposition}[Improved]
If $r_{\mathrm{analytic}}(t_1) \geq r_{\mathrm{analytic}}(t_2)$, and the conditional entropy given incorrect predictions does not increase too rapidly with $p(t,q)$, then
\[
H(y \mid \hat{y}, t_1, q) \leq H(y \mid \hat{y}, t_2, q).
\]
\end{proposition}

\begin{proof}
Let
\[
p_i := p(t_i, q) = \Pr(\hat{y} = y \mid t_i, q),
\]
and define
\[
C_i := H(y \mid \hat{y}, t_i, q, \hat{y} \neq y),
\]
the expected conditional entropy when the prediction is incorrect.

From the law of total entropy,
\[
H(y \mid \hat{y}, t_i, q) = p_i \cdot H(y \mid \hat{y} = y, t_i, q) + (1 - p_i) \cdot C_i.
\]
Since $H(y \mid \hat{y} = y, t_i, q) = 0$ (the answer is known exactly when prediction is correct), we have
\[
H(y \mid \hat{y}, t_i, q) = (1 - p_i) C_i. \tag{1}
\]

Given $r_{\mathrm{analytic}}(t_1) \geq r_{\mathrm{analytic}}(t_2)$ implies $p_1 \geq p_2$, we have
\[
1 - p_1 \leq 1 - p_2.
\]

Let $C(p)$ denote the average conditional entropy given incorrect predictions as a function of 
$p = \Pr(\hat y = y \mid t,q)$. 
The total conditional entropy can be decomposed as
\[
H(y \mid \hat y, t, q) 
= (1 - p)\, C(p),
\]
since correct predictions contribute zero uncertainty and all remaining uncertainty 
arises from incorrect cases. 
Therefore, if $(1 - p)C(p)$ is non-increasing in $p$, increasing the success probability $p$ 
leads to a reduction in the overall conditional entropy.
 
This is equivalent to
\[
\frac{d}{dp}\big[(1-p)C(p)\big] = (1-p)C'(p) - C(p) \leq 0,
\]
or
\[
C'(p) \leq \frac{C(p)}{1-p}.
\]
This condition allows $C(p)$ to increase with $p$, but not too rapidly. Intuitively, a better plan (higher $p$) may still produce wrong answers, but their distribution over $y$ should not become much more dispersed.

From (A1) and $p_1 \geq p_2$, we have
\[
(1 - p_1) C(p_1) \leq (1 - p_2) C(p_2).
\]
Substituting into (1) gives
\[
H(y \mid \hat{y}, t_1, q) \leq H(y \mid \hat{y}, t_2, q).
\]

Finally, from the identity
\[
I(y; \hat{y} \mid t, q) = H(y \mid t, q) - H(y \mid \hat{y}, t, q),
\]
and noting that $H(y \mid t, q)$ is independent of the predictor $\hat{y}$, we obtain
\[
I(y; \hat{y} \mid t_1, q) \geq I(y; \hat{y} \mid t_2, q). \qedhere
\]
\end{proof}

\subsection{Experimental parameter setup}
\label{sec:setup}

We conducted all experiments on eight H200 GPUs. In the supervised fine-tuning (SFT) stage, we trained Qwen2.5-7B-Instruct for 3 epochs. In the reinforcement learning (RL) stage, we adopted the GRPO algorithm, with a global batch size of 128 and a micro batch size of 4 per GPU. During rollout, the model generated 12 samples per step, including 4 analytic plans, each corresponding to 3 Chain-of-Thought (CoT) reasoning trajectories. For generation, we set temperature = 1.0 and top-p = 1.0, while for validation we used temperature = 0.6, top-p = 0.95. The number of RL training steps was configured as follows: LLaMA3.2-3B and Qwen2.5-7B-Instruct were trained for 350 steps, Qwen3-8B for 150 steps, and Qwen3-14B for 50 steps, with other hyperparameters kept the same across models. In addition, the learning rate (\texttt{lr}) was set to $1.0 \times 10^{-6}$, the weight decay (\texttt{weight\_decay}) was $1.0 \times 10^{-2}$, the optimizer was \texttt{adamw} (choices: \texttt{adamw} or \texttt{adamw\_bf16}), the learning‑rate warmup ratio (\texttt{lr\_warmup\_ratio}) was 0. The hyperparameter $\beta$ is assigned to 0.5.  For all Qwen3-8b, max token is 4.5k and for Qwen2.5-7B-Instrct , the max token is 3.5K.

\subsection{Test case of \alg}
\label{sec: test cases.}
\clearpage

\begin{figure*}
\begin{promptpbox}{Generated Samples in Real Test cases of \alg (Sample 1)}

\textbf{Question:} Determine the interval of convergence for the series  
\[
1+\frac{x}{2}+\frac{2! x^2}{3^2}+\frac{3!x^3}{4^3}+\ldots
\]

\begin{innerbox}[blues]{\texttt{\textless plan\textgreater}}
1. Identify the general term of the series.  
2. Apply the Ratio Test to find the radius of convergence.  
3. Check the endpoints of the interval separately to determine convergence or divergence there.  
4. State the final interval of convergence.  
\end{innerbox}

\begin{innerbox}[green]{\texttt{\textless think\textgreater}}
We start by writing down the general term of the series. The pattern is clear: the $n$-th term looks like

$$
\frac{n!}{(n+1)^n} x^n .
$$

To test for convergence, the Ratio Test is natural. Taking the ratio $a_{n+1}/a_n$, most factors cancel, and it reduces to

$$
|x|\left(\frac{n+1}{n+2}\right)^{n+1}.
$$

As $n \to \infty$, the power term tends to $e^{-1}$. So the limit is $|x|/e$. By the Ratio Test, convergence requires $|x|/e < 1$, which means $|x| < e$. That gives us the radius of convergence $R = e$.

Now check the endpoints. At $x = e$, the terms are $\frac{n!}{(n+1)^n} e^n$. Stirling’s formula shows that $n! \sim (n/e)^n \sqrt{2\pi n}$, so the terms behave like $\sqrt{n}$ times a constant. They don’t go to zero, so the series diverges. At $x = -e$, the alternating sign doesn’t help because the size of the terms again grows like $\sqrt{n}$. They still don’t go to zero, so the series diverges there as well.

Therefore, the series converges only strictly inside the interval, and the interval of convergence is

$$
(-e, e).
$$

\end{innerbox}

\textbf{Answer:}  
\[
\boxed{(-e,\,e)}
\]
\end{promptpbox}
\end{figure*}

\begin{figure*}
\begin{promptpbox}{Generated Samples in Real Test cases of \alg (Sample 2)}

\textbf{Question:} What is the least possible value of the sum $|x - 1| + |x - 1.5| + |x - 2|$?

\begin{innerbox}[blues]{\texttt{\textless plan\textgreater}}

1. Recognize that this is a minimization problem involving sums of absolute values.
2. Recall that for a sum of absolute values of linear functions, the minimum occurs at a median of the points.
3. Identify the points: 1, 1.5, and 2.
4. Determine the median of these three points.
5. Calculate the sum at the median point to find the minimum value.
\end{innerbox}

\begin{innerbox}[green]{\texttt{\textless think\textgreater}}
We want to minimize

$$
f(x) = |x - 1| + |x - 1.5| + |x - 2|.
$$

General fact: For $|x - a_1| + \cdots + |x - a_n|$, the minimum occurs when $x$ is the median of the $a_i$.

Here, the three points are $1, 1.5, 2$. Ordered: $1 < 1.5 < 2$.
So the median is $1.5$.

Evaluate at $x = 1.5$:

$$
|1.5 - 1| + |1.5 - 1.5| + |1.5 - 2| = 0.5 + 0 + 0.5 = 1.
$$

Check at neighbors:

* At $x = 1$: $0 + 0.5 + 1 = 1.5$.
* At $x = 2$: $1 + 0.5 + 0 = 1.5$.
* At $x = 1.25$: $0.25 + 0.25 + 0.75 = 1.25$.
* At $x = 1.75$: $0.75 + 0.25 + 0.25 = 1.25$.

All larger than 1.
Thus, the minimum occurs uniquely at $x = 1.5$, with value $1$.
\end{innerbox}

\textbf{Answer:}

$$
\boxed{1}
$$

\end{promptpbox}
\end{figure*}

\begin{figure*}
\begin{promptpbox}{Generated Samples in Real Test cases of \alg (Sample 3)}

\textbf{Question:} Find the range of $p$ for which the stability condition $|\xi| \leq 1$ holds, given that

$$
|\xi|^2 = 1 - 4p^2(1-p^2)\, s^4, \quad s = \sin\left(\tfrac{\omega}{2}\right) \in [0,1].
$$

\begin{innerbox}[blues]{\texttt{\textless plan\textgreater}}

1. Understand the stability condition: $|\xi| \leq 1$.
2. Substitute the given expression for $|\xi|^2$.
3. The stability condition $|\xi| \leq 1$ is equivalent to $|\xi|^2 \leq 1$.
4. Since $|\xi|^2$ is expressed as $1 - 4p^2(1-p^2)s^4$, ensure it is non-negative ($\geq 0$) to avoid complex values.
5. Analyze the sign of $p^2(1-p^2)$.
6. Determine the condition on $p$ such that $|\xi|^2 \in [0,1]$ for all $s \in [0,1]$.
7. Conclude the range of $p$.
\end{innerbox}

\begin{innerbox}[green]{\texttt{\textless think\textgreater}}
We have

$$
|\xi|^2 = 1 - 4p^2(1-p^2)s^4, \quad s \in [0,1].
$$

For stability, we need $|\xi| \leq 1$, i.e., $|\xi|^2 \leq 1$.
That means

$$
1 - 4p^2(1-p^2)s^4 \leq 1.
$$

This simplifies to

$$
-4p^2(1-p^2)s^4 \leq 0 \quad \Rightarrow \quad p^2(1-p^2) \geq 0.
$$

Since $p^2 \geq 0$, this requires $1 - p^2 \geq 0 \implies p^2 \leq 1$.
Thus $|p| \leq 1$.

Additionally, we must ensure $|\xi|^2 \geq 0$. For $|p| \leq 1$, the term $4p^2(1-p^2)s^4 \geq 0$, so $|\xi|^2 = 1 - \text{(nonnegative)} \in [0,1]$.
This guarantees both realness and the upper bound condition.

If $|p| > 1$, then $1 - p^2 < 0$, so $p^2(1-p^2) < 0$. In this case the subtracted term becomes negative, effectively making $|\xi|^2 > 1$ for some $s$, which breaks stability.

Hence the stability condition holds exactly when

$$
p \in [-1,1].
$$

\end{innerbox}

\textbf{Answer:}

$$
\boxed{[-1,1]}
$$

\end{promptpbox}
\end{figure*}

\begin{figure*}
\begin{promptpbox}{Example of an LLM correcting an incorrect plan through self-reflection.}

\textbf{Question:} Six points $ A, B, C, D, E, $ and $ F $ lie in a straight line in that order. Suppose that $ G $ is a point not on the line and that $ AC = 26 $, $ BD = 22 $, $ CE = 31 $, $ DF = 33 $, $ AF = 73 $, $ CG = 40 $, and $ DG = 30 $. Find the area of $ \triangle BGE $.

\begin{innerbox}[blues]{\texttt{\textless plan\textgreater}}
1. Assign coordinates to points A through F on the x-axis with A at 0.
2. Use the given segment lengths (AC, BD, CE, DF, AF) to set up equations and solve for all individual segment lengths AB, BC, CD, DE, EF.
3. Determine the coordinates of B, C, D, E, F.
4. Use distances CG and DG to locate point G by solving a system of equations from the distance formula.
5. Calculate area of triangle BGE using the shoelace formula with coordinates B, G, E. \textcolor{red}{\textbackslash \textbackslash
 Wrong plan}
\end{innerbox}

\begin{innerbox}[green]{\texttt{\textless think\textgreater}}
Wait, I notice the plan is incorrect in its approach to finding point G. The plan says to use distances CG and DG to locate G, but this will only give us the x-coordinate of G, not both coordinates. Let me revise this. \textcolor{red}{\textbackslash \textbackslash  
 Revise it by LLM self-reflection}

Looking more carefully at the problem, I need to find the area of triangle BGE. The points B, G, and E form a triangle where:
- B and E are on the x-axis
- G is somewhere in the plane

The area of triangle BGE can be calculated as ½ × |BE| × height from G to line BE. Since BE is horizontal (both B and E on x-axis), the height is simply |y-coordinate of G|.

So the revised approach should be:
1. Find coordinates of B and E
2. Find the y-coordinate of G
3. Area = $\frac{1}{2} \times |BE| × |y_G|$.

Let me implement this revised plan.

First, let's find all segment lengths as in the original plan:
Let AB = a, BC = b, CD = c, DE = d, EF = e

From AC = 26: a + b = 26
From BD = 22: b + c = 22  
From CE = 31: c + d = 31
From DF = 33: d + e = 33
From AF = 73: a + b + c + d + e = 73

From a + b = 26 and a + b + c + d + e = 73, we get c + d + e = 47
From c + d = 31 and c + d + e = 47, we get e = 16
From d + e = 33 and e = 16, we get d = 17
From c + d = 31 and d = 17, we get c = 14
From b + c = 22 and c = 14, we get b = 8
From a + b = 26 and b = 8, we get a = 18

So: AB = 18, BC = 8, CD = 14, DE = 17, EF = 16

Place A at (0,0), then:
B = (18, 0)
C = (26, 0) 
D = (40, 0)
E = (57, 0)
F = (73, 0)

Now, to find the y-coordinate of G:
We know CG = 40 and DG = 30
Let G = (x,y)

CG² = (x-26)² + y² = 1600
DG² = (x-40)² + y² = 900

Subtract: (x-26)² - (x-40)² = 700
(x² - 52x + 676) - (x² - 80x + 1600) = 700
28x - 924 = 700
28x = 1624
x = 58

Now use (x-26)² + y² = 1600:
(58-26)² + y² = 1600
32² + y² = 1600
1024 + y² = 1600
y² = 576
y = 24 (taking positive)

So G = (58, 24)

Now triangle BGE:
B = (18, 0), E = (57, 0), G = (58, 24)
BE = 57 - 18 = 39
Height = 24 (y-coordinate of G)

Area = ½ × 39 × 24 = 468

\end{innerbox}

\textbf{Answer:}

$$
\boxed{468}
$$

\end{promptpbox}
\end{figure*}

\clearpage

\begin{figure*}
\begin{rlpbox}[\alg prompt]
\{\{ content | trim \}\}

Begin by outlining a high-level plan wrapped in <plan></plan> tags.

- This plan should capture only the major phases, strategic choices, and conditional branches.

- Avoid low-level steps, calculations, or detailed reasoning here.
Next, reason step by step within <think></think>.

- During reasoning, critically evaluate the initial plan. If you find any errors, inconsistencies, or improvements needed, revise your plan mentally and continue reasoning based on the revised plan.

- Explicitly state if you are revising the plan and describe the changes.

- This is your detailed chain-of-thought: work through assumptions, intermediate steps, and logical derivations until the solution is reached.  

Finally, provide the final answer enclosed within 
\begin{lstlisting}
\boxed{}
\end{lstlisting}

\end{rlpbox}
\end{figure*}

\end{document}